  \providecommand\BibTeX{{%
    \normalfont B\kern-0.5em{\scshape i\kern-0.25em b}\kern-0.8em\TeX}}}
\algnewcommand{\Initialize}[1]{%
  \State \textbf{Initialize:}
  \Statex \hspace*{\algorithmicindent}\parbox[t]{.8\linewidth}{\raggedright #1}
}
\DeclareMathOperator*{\concat}{\scalerel*{\Vert}{\sum}}
\definecolor{firstcolor}{HTML}{009E73}
\definecolor{secondcolor}{HTML}{0072B2}
\definecolor{thirdcolor}{HTML}{D55E00}
\definecolor{truepair}{HTML}{00CC00}
\definecolor{falsepair}{HTML}{FF8000}
\newcommand\colorfirst[1]{\textcolor{firstcolor}{\textbf{#1}}}
\newcommand\colorsecond[1]{\textcolor{secondcolor}{\textbf{#1}}}\newcommand\colorthird[1]{\textcolor{thirdcolor}{\textbf{#1}}}
\author{Harry Shomer}
\email{shomerha@msu.edu}
\affiliation{%
  \institution{Michigan State University}
  \city{East Lansing}
  \country{USA}
}
\author{Yao Ma}
\email{may13@rpi.edu}
\affiliation{%
  \institution{Rensselaer Polytechnic Institute}
  \city{Troy}
  \country{USA}
}
\author{Haitao Mao}
\email{haitaoma@msu.edu}
\affiliation{%
  \institution{Michigan State University}
  \city{East Lansing}
  \country{USA}
}
\author{Juanhui Li}
\email{lijuanh1@msu.edu}
\affiliation{%
  \institution{Michigan State University}
  \city{East Lansing}
  \country{USA}
}
\author{Bo Wu}
\email{bwu@mines.edu}
\affiliation{%
  \institution{Colorado School of Mines}
  \city{Golden}
  \country{USA}
}
\author{Jiliang Tang}
\email{tangjili@msu.edu}
\affiliation{%
  \institution{Michigan State University}
  \city{East Lansing}
  \country{USA}
}
\begin{document}

\title{LPFormer: An Adaptive Graph Transformer  for Link Prediction}
\renewcommand{\shorttitle}{LPFormer: An Adaptive Graph Transformer for LP}

%%
%% The abstract is a short summary of the work to be presented in the
%% article.
\begin{abstract}
Link prediction is a common task on graph-structured data that has seen applications in a variety of domains. Classically, hand-crafted heuristics were used for this task. Heuristic measures are chosen such that they correlate well with the underlying factors related to link formation. In recent years, a new class of methods has emerged that combines the advantages of message-passing neural networks (MPNN) and heuristics methods. These methods perform predictions by using the output of an MPNN in conjunction with a ``pairwise encoding'' that captures the relationship between nodes in the candidate link. They have been shown to achieve strong performance on numerous datasets. However, current pairwise encodings often contain a strong  inductive bias, using the same underlying factors to classify all links. This limits the ability of existing methods to learn how to properly classify a variety of different links that may form from different factors. To address this limitation, we propose a new method, {\bf LPFormer}, which attempts to adaptively learn the pairwise encodings for each link. LPFormer models the link factors via an attention module that learns the pairwise encoding that exists between nodes by modeling multiple factors integral to link prediction. Extensive experiments demonstrate that LPFormer can achieve SOTA performance on numerous datasets while maintaining efficiency. The code is available at The code is available at \href{https://github.com/HarryShomer/LPFormer}{\color{blue}{https://github.com/HarryShomer/LPFormer}}.
\end{abstract}

%%
%% The code below is generated by the tool at http://dl.acm.org/ccs.cfm.
%% Please copy and paste the code instead of the example below.
%%
\begin{CCSXML}
<ccs2012>
   <concept>
       <concept_id>10010147.10010257</concept_id>
       <concept_desc>Computing methodologies~Machine learning</concept_desc>
       <concept_significance>500</concept_significance>
       </concept>
 </ccs2012>
\end{CCSXML}

\ccsdesc[500]{Computing methodologies~Machine learning}

%%
%% Keywords. The author(s) should pick words that accurately describe
%% the work being presented. Separate the keywords with commas.
\keywords{link prediction, graph transformer}

\maketitle

\section{Introduction} \label{sec:intro}

\begin{figure}[t]
\centering
      \includegraphics[width=0.6\linewidth]{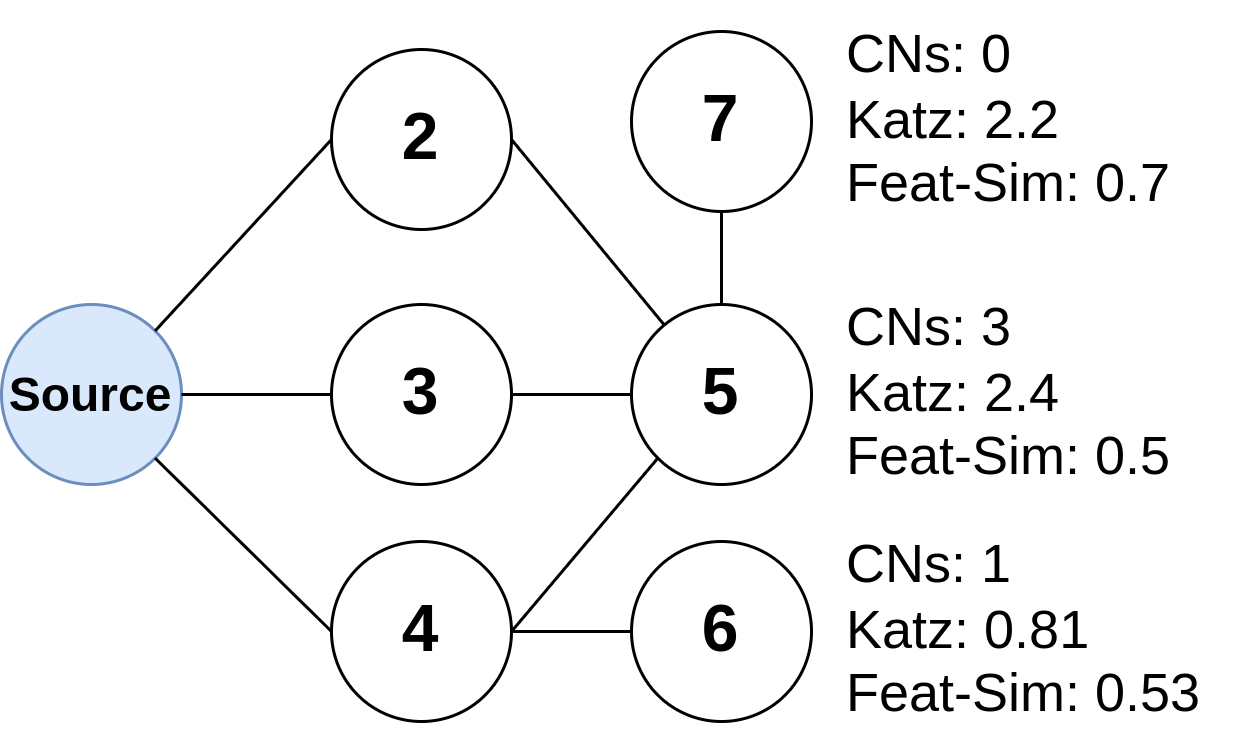}
        \caption{Example of multiple heuristic scores for the candidate links (source, 5), (source, 6), and (source, 7). Each heuristic corresponds to a different LP factor -- local (CNs), global (Katz), and feature proximity (Feat-Sim).}
\label{fig:example}

% \vskip -0.5em
\end{figure}

Link prediction (LP) attempts to predict unseen edges in a graph. It has been adopted in many applications including recommender systems~\cite{huang2005link}, social networks~\cite{daud2020applications}, and drug discovery~\cite{abbas2021application}. Traditionally, hand-crafted heuristics were used to identify new links in the graph~\cite{newman2001clustering, zhou2009predicting, adamic2003friends}.
Heuristics are often chosen based on factors that typically correlate well with the formation of new links.  For example, a popular heuristic is common neighbors (CNs), which assume that the links are more likely to exist between node pairs with more shared neighbors. It has been found that these factors, which we refer to as ``LP Factors'', often stem from the local and global structural information and feature proximity~\cite{mao2023revisiting}. 
We give an example in Figure~\ref{fig:example} that demonstrates different heuristic scores for multiple candidate links. Each heuristic score corresponds to one of the LP factors: CNs for local information, Katz for global, and Feat-Sim for feature proximity. We can observe that the pair (source, 5) has the highest CN and Katz score of the candidate links, indicating an abundance of local and global structural information between the pair. On the other hand, the feature similarity for (source, 5) is the lowest among the candidate links. This indicates that {\it different LP factors and heuristics have distinct assumptions about why links are formed}. 

More recently, message passing neural networks (MPNNs)~\cite{gilmer2017neural}, which are able to learn effective node representations via message passing, have been widely adopted for LP tasks. They predict the existence of a link by combining the node representations of both nodes in the link. However, such a node-centric view is unable to incorporate the pairwise information between the nodes in the link. Because of this, conventional MPNNs have been demonstrated to be poor link predictors due to their limited capability to learn effective and expressive link representations~\cite{labeling_trick, srinivasan2019equivalence}. To address this issue, recent efforts~\cite{seal, nbfnet} have attempted to move beyond the node-centric view of traditional MPNNs by equipping them with pairwise information specific to the link being predicted (i.e. the ``target link'')~\cite{seal, nbfnet}. This is done by customizing the message passing process to each target link. 
% For example, SEAL~\cite{seal} first extracts the k-hop subgraph around each target link and then initializes the nodes with features that describe their relationship with the target link. Message passing is correspondingly run on this subgraph, with the pooled node representations being used for prediction. It has been shown that SEAL, along with a related method NBFNet~\cite{nbfnet}, can learn many LP heuristics such as common neighbors~\cite{newman2001clustering} and Katz~\cite{katz1953new}. 
However, a concern with this approach is that it can be prohibitively expensive~\cite{chamberlain2022graph}, as message passing needs to be run for each individual target link. This is as opposed to traditional MPNNs which only run message passing once for all target links.   

To overcome these inefficiencies, recent methods~\cite{yun2021neo, chamberlain2022graph, ncn} have instead explored ways to inject pairwise information into the model, without individualizing the message passing to each target link. This is done by decoupling the message passing and link-specific pairwise information. By doing so, the message passing only needs to be done once for all target links. To include the pairwise information, these methods, which we refer to as ``Decoupled Pairwise MPNNs'' (DP-MPNNs), instead learn a ``pairwise encoding'' to encode the pairwise relationship of the target link. The choice of pairwise encoding is often based on heuristics that correspond to common LP factors (e.g., common neighbors). DP-MPNNs have gained attention as they can achieve promising performance while being much more efficient than methods that customize the message passing mechanism.

However, DP-MPNNs are often limited in the choice of pairwise encoding, using a one-size-fits-all solution for all target links. This has two limitations. {\bf (1)} The pairwise encoding may fail to consider some integral LP factors. For example, NCNC~\cite{ncn} only considers the 1-hop neighborhood when computing the pairwise encoding, thereby ignoring the global structural information. {\it This suggests the need for a pairwise encoding that considers multiple types of LP factors}. {\bf (2)} The pairwise encoding uses the same LP factors for all target links. This assumes that all target links need the same factors. However, it may not necessarily be true. Recently,~\citet{mao2023revisiting} have shown that different LP factors are necessary to classify different target links. It is evident that even for the same dataset, multiple LP factors are needed to properly predict all target links. This further applies to different datasets, where certain factors are more prominent than others. As such, it faces tremendous challenges when considering multiple types of LP factors. While one factor may effectively model some target links, it will fail for other target links where those patterns aren't present. 
% \ym{Will it harm? We need to be careful here. Is it more like ``only using one factor cannot handle some links that do not follow this LP factor''} 
{\it It is therefore desired to consider different LP factors for different target links}.

These observations motivate us to ask -- {\it can we design an efficient method that can adaptively determine which LP factors to incorporate for each individual target link?} Essentially, it requires a pairwise encoding that (a) models multiple LP factors, (b) can be tailored to fit each individual target link, and (c) is efficient to calculate. By doing so, we can flexibly adapt the pairwise information based on the existing needs of each target link.   
To achieve this, we propose {\bf LPFormer} -- {\bf L}ink {\bf P}rediction Trans{\bf Former}. LPFormer is a type of graph Transformer~\cite{muller2023attending} designed specifically for link prediction. Given a target link $(a, b)$, LPFormer models the pairwise encoding via an attention module that learns how $a$ and $b$ relate in the context of various LP factors. This allows for a more customizable set of pairwise encodings that are specific to each target link. Extensive experiments validate that LPFormer can achieve SOTA on a variety of benchmark datasets. We further demonstrate that LPFormer is better at modeling several types of LP factors, highlighting its adaptability, while also maintaining efficiency on denser graphs.

\section{Background} \label{sec:background}

\subsection{Related Work} \label{sec:related_work}

Link prediction (LP) aims to model how links are formed in a graph. The process by which links are formed, i.e., link formation, is often governed by a set of underlying factors~\cite{barabasi2002evolution, liben2003link}. We refer to these as ``LP factors''. Two categories of methods are used for modeling these factors -- heuristics and MPNNs. We describe each class of methods. We further include a discussion on existing graph transformers.
% \vskip 1em

{\bf Heuristics for Link Prediction}. Heuristics methods~\cite{newman2001clustering, zhou2009predicting} attempt to explicitly model the LP factors via hand-crafted measures. Recently, \citet{mao2023revisiting} have shown that there are three main factors that correlate with the existence of a link: (1) local structural information, (2) global structural information, and (3) feature proximity. {\bf Local structural information} only considers the immediate neighborhood of the target link. Representative methods include Common Neighbors (CN)~\cite{newman2001clustering}, Adamic Adar  
 (AA)~\cite{adamic2003friends}, and Resource Allocation  (RA)~\cite{zhou2009predicting}. They are 
 predicated on the assumption that nodes that share a greater number neighbors exhibit a higher probability of forming connections. {\bf Global structural information} further considers the global structure of the graph. Such methods include Katz~\cite{katz1953new} and Personalized PageRank~(PPR)~\cite{pagerank}. 
These methods posit that nodes interconnected by a higher number of paths are deemed to have larger similarity and, therefore, are more likely to form connections. 
Lastly, {\bf feature proximity} assumes nodes with more similar features connect~\cite{murase2019structural}. Previous work~\cite{nickel2014reducing, zhao2017leveraging} have shown that leveraging the node features are helpful in predicting links.
Lastly, we note that \citet{mao2023revisiting} has recently shown that {\it to properly predict a wide variety of links, it's integral to incorporate all three of these factors}.

{\bf MPNNs for Link Prediction}.  Message Passing Neural Networks (MPNNs)~\cite{gilmer2017neural} aim to learn node representations via the message passing mechanism. Traditional MPNNs have been used for LP including GCN~\cite{kipf2016semi}, SAGE~\cite{hamilton2017inductive}, and GAE~\cite{kipf2016variational}. However, they have been shown to be suboptimal for LP as they aren't expressive enough to capture important pairwise patterns~\cite{zhang2021labeling, srinivasan2019equivalence}. SEAL~\cite{seal} and NBFNet~\cite{nbfnet} try to address this by customizing the message passing process to each target link. This allows for the message passing to learn pairwise information specific to the target link. However, these methods have been shown to be unduly expensive as they require a separate round of message passing for each target link. As such, recent methods have been proposed to instead decouple the message passing and pairwise information~\cite{yun2021neo,chamberlain2022graph,ncn}, reducing the time needed to do message passing. Such methods include NCN/NCNC~\cite{ncn} which exploit the common neighbor information and BUDDY~\cite{chamberlain2022graph} and Neo-GNN~\cite{yun2021neo} which consider the global structural information. 

{\bf Graph Transformers}. Recent work has attempted to extend the original Transformer~\cite{vaswani2017attention}
architecture to graph-structured data. Graphormer~\cite{graphormer} learns node representations by attending all nodes to each other. To properly model the structural information, they propose to use multiple types of structural encodings (i.e., structural, centrality, and edge). SAN~\cite{kreuzer2021rethinking} further considers the use of the Laplacian positional encodings (LPEs) to enhance the learnt structural information. Alternatively, TokenGT~\cite{kim2022pure} considers all nodes and edges as tokens in the sequence when performing attention. Due to the large complexity of these models, they are unable to scale to larger graphs. To address this, several graph transformers~\cite{chen2022nagphormer, wu2022nodeformer} have been proposed for node classification that attempt to efficiently attend to the graph. However, while some work~\cite{chen2021hitter, pahuja2023retrieve} have formulated transformers for knowledge graph completion, to our knowledge, {\bf there are no graph transformers designed specifically for LP on uni-relational graphs}.

\subsection{Preliminaries}

We denote a graph as $\mathcal{G} = \{\mathcal{V}, \mathcal{E}\}$, where $\mathcal{V}$ and $\mathcal{E}$ are the sets of nodes and edges in $\mathcal{G}$, respectively. The adjacency matrix is represented as $A \in \mathbb{R}^{\lvert V \rvert \times \lvert V \rvert}$. The $d$-dimensional node features are represented by the matrix $X \in \mathbb{R}^{\lvert V \rvert \times d}$. The set of neighbors for a node $v$ is given by $\mathcal{N}(v)$. The set of overlapping neighbors between two nodes $a$ and $b$, i.e., the common neighbors (CNs), is expressed by $\mathcal{N}^{\text{CN}}_{(a, b)}$. We further denote the set of nodes that are 1-hop neighbors of only one of $a$ or $b$ as $\mathcal{N}^{1}_{(a, b)}$ and the nodes that are ${>}1$-hop from both nodes as $\mathcal{N}^{>1}_{(a, b)}$. Lastly, the personalized pagerank (PPR) score for a root node $v$ and an arbitrary node $u$ is given by $\text{ppr}(v, u)$. 

% Lastly, recent MPNNs designed for link prediction use a decoupled strategy to include the pairwise information~\cite{chamberlain2022graph, ncn, yun2021neo}. \jt{I just feel the following should be combined with 3.1. A general framework for link prediction, then we inroduce a general pairwise encode? next we should illustate how existing methods fit to the framework } These methods, DP-MPNNs, predict the existence of a link $(a, b)$ via both the node representations and a pairwise encoding $s(a, b)$. The probability of a link existing, $p(a, b)$, is given by:
% \begin{align} \label{eq:gnn_pairwise}
%     &H = \text{MPNN}(A, X), \nonumber \\
%     &p(a, b) = \sigma \left( \text{MLP} \left( \mathbf{h}_a \odot \mathbf{h}_b \concat s(a, b) \right) \right),
% \end{align}
% where $h_i$ is the representation of node $i$ encoded by the MPNN.
% Since the computation of the pairwise encoding $s(a, b)$ and the message passing are decoupled, it allows for much greater efficiency than methods that do message passing separately for each target link~\cite{seal, nbfnet}.

\section{The Proposed Framework} \label{sec:framework}

\begin{figure*}[t]
    % \small
    \centering      
       \includegraphics[width=1\linewidth]{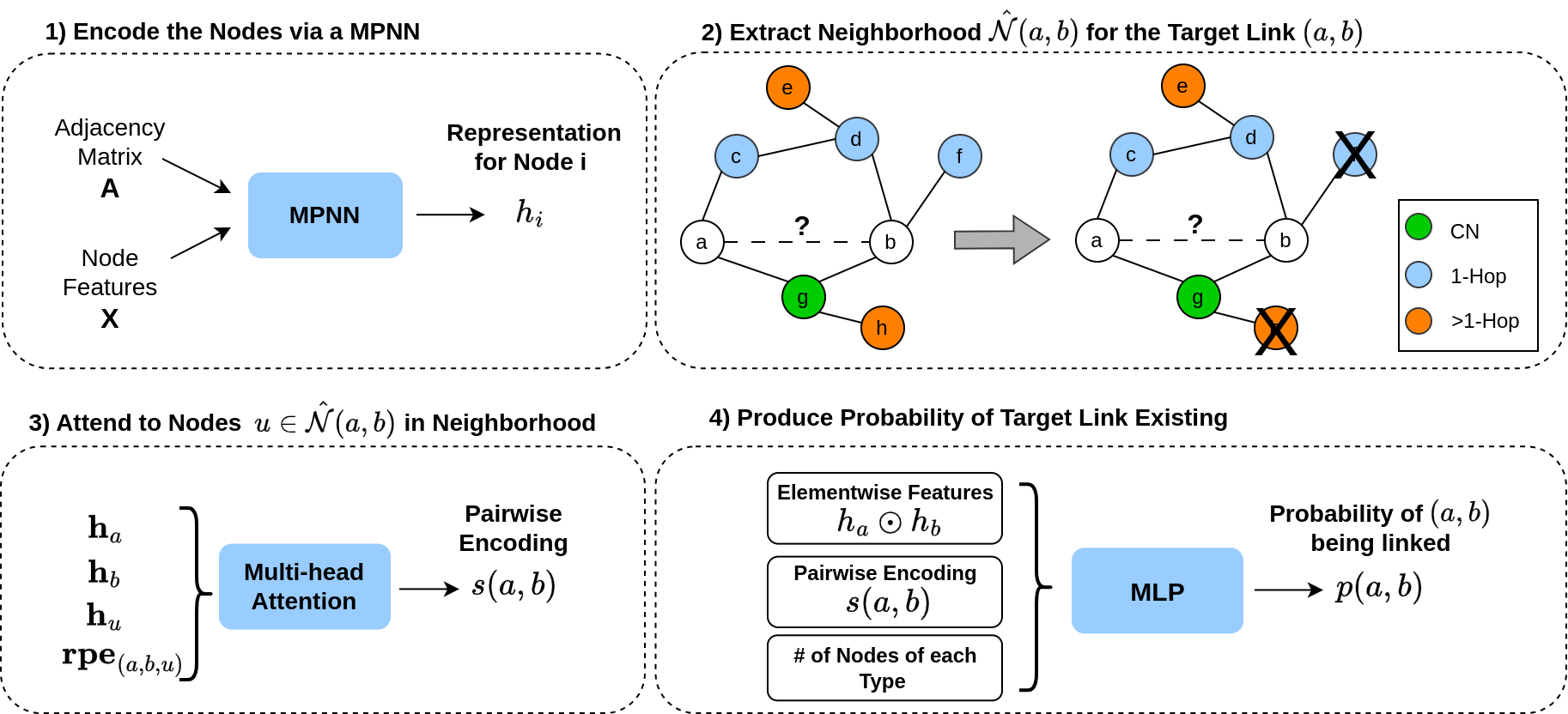}

    \caption{An overview of LPFormer. {\bf (1)} Encode the nodes via a MPNN. {\bf (2)} For a given target link, we determine which nodes to attend to ($\hat{\mathcal{N}}(a, b)$) via the PPR-based thresholding technique in Eq.~\eqref{eq:ppr_threshold}. {\bf (3)} The pairwise encoding is computed by attending to each node, $u \in \hat{\mathcal{N}}(a, b)$ using the feature and relative positional encoding $\mathbf{\text{rpe}}_{(a, b, u)}$. {\bf (4)} The pairwise encoding, node representations, and counts of different node types are concatenated and used to compute the final probability of the target link existing. 
   }

    \label{fig:framework}
\end{figure*}

In Section~\ref{sec:intro}, we highlighted the importance of adaptively modeling multiple types of LP factors. However, current methods that use pairwise encodings, i.e., DP-MPNNs, struggle to appropriately achieve this goal. This is due to two issues: {\bf(1)} They only attempt to model a subset of the potential LP factors (e.g., only local structural information), limiting their ability to model multiple factors. {\bf (2)} They use a one-size-fits-all approach in regard to pairwise encoding, using the same combination of LP factors for each target link. These issues strongly limit the potential of such methods to properly model a variety of different target links. To overcome these problems, we propose {\bf LPFormer}, a new transformer-based method that can adaptively customize the pairwise information for each target link by considering a variety of different LP factors in an efficient manner.

\subsection{A General View of Pairwise Encodings} \label{sec:gen_pairwise}

Recent MPNNs for LP use a decoupled strategy to include the pairwise information~\cite{chamberlain2022graph, ncn, yun2021neo}. These methods, DP-MPNNs, predict the existence of a link $(a, b)$ via both the node representations and a pairwise encoding $s(a, b)$. They follow the formulation below:
\begin{align} \label{eq:gnn_pairwise}
    &H = \text{MPNN}(A, X), \nonumber \\
    &p(a, b) = \sigma \left( \text{MLP} \left( \mathbf{h}_a \odot \mathbf{h}_b \concat s(a, b) \right) \right),
\end{align}
% \jt{all existing works DP-MPNNs follow eq (1), if yes, how??? if no, we proposed it??  }
where $h_i$ is the representation of node $i$ encoded by the MPNN. Various DP-MPNNs adopt different ways to model the pairwise encoding. For example, NCN~\cite{ncn} models the pairwise encoding $s(a,b)$ as the summation of the node representations of the CNs. 
The definitions of $s(a,b)$ for other prominent DP-MPNNs can be found in Appendix~\ref{sec:app_pair_encoding}. 
The pairwise encodings in these existing methods are typically manually selected or extracted from the graph, which limits the LP factors they can cover. For example, $s(a,b)$ in NCN and NCNC only capture the local structural information. BUDDY~\cite{chamberlain2022graph} ignores the node features when computing the pairwise encoding. To flexibly model multiple types of LP factors, we propose a general formulation for pairwise encodings as follows,
\begin{equation} \label{eq:gen_pairwise}
    s(a, b) = \sum_{u \in \mathcal{V}} w(a, b, u) \odot h(a, b, u),
\end{equation}
where $w(a, b, u)$ measures the importance of node $u$ to $(a, b)$, and $h(a, b, u)$ is the encoding of node $u$ relative to $(a,b)$. By considering {\it which} nodes should be considered for $(a, b)$ and {\it how} they are related to the node pair, Eq.~\eqref{eq:gen_pairwise} can model different LP factors by manually defining $w(a, b, u)$ and $h(a, b, u)$. In particular, we demonstrate how the heuristic methods corresponding to different LP factors can fit into this framework.

{\bf Common Neighbors (CNs)}~\cite{newman2001clustering}: CNs considers the local structural information and is defined for a pair of nodes $(a, b)$  as $\mathcal{N}^{\text{CN}}_{(a, b)} = \mathcal{N}(a) \cap \mathcal{N}(b)$. Eq.~\eqref{eq:gen_pairwise} is equal to the CNs when $h(a, b, u)=1$ and:
\begin{equation} \label{eq:cn_weight}
    w(a, b, u) = \left\{\begin{array}{ll}
                            1, &\text{when } u \in \mathcal{N}(a) \cap \mathcal{N}(b) \\
                            0, &\text{else } 
                            \end{array}\right\}. 
\end{equation}
% \vskip 0.5em

{\bf Katz Index}~\cite{katz1953new}: The Katz index models the global structural information. It is defined as weighted summation of the number of paths of different lengths connecting $a$ and $b$ and a decay weight $\beta \in [0, 1]$,
\begin{equation}
    \text{Katz}(a, b) = \sum_{l=1}^{\infty} \beta^l A_{a, b}^l. \nonumber
\end{equation}
This is equivalent to Eq.~\eqref{eq:gen_pairwise} where $w(a, b, u) = \sum_{l=1}^{\infty} \beta^l e_a^T A^l$ and
\begin{equation} \label{eq:b_equals_u1}
     h(a, b, u) = \left\{\begin{array}{ll}
                            e_b^T, &\text{when } u=b \\
                            \mathbf{0}, &\text{else } 
                            \end{array}\right\}, \nonumber
\end{equation}
where $e_i \in \mathbb{B}^{\lvert \mathcal{V} \rvert}$ is a one-hot vector for a node $i$.

{\bf Feature Similarity}: The feature similarity of the pair of nodes $(a, b)$ is expressed by $\text{dis}(\mathbf{x}_a, \mathbf{x}_b)$ where $\mathbf{x}_a$ are the node features of node $a$ and $\text{dis}(\cdot)$ is a distance function (e.g., euclidean distance). This can be rewritten as Eq.~\eqref{eq:gen_pairwise} by substituting $w(a, b, u) = \text{dis}(\mathbf{x}_a, \mathbf{x}_u)$ and $h(a, b, u) = e_b^T$.

These examples demonstrate that the general formulation can indeed model many different LP factors including local and global structural information and feature proximity.
We further show in Appendix~\ref{sec:app_gen_formula} that Eq.~\eqref{eq:gen_pairwise} can model a variety of additional LP factors including RA~\cite{zhou2009predicting}, the pairwise encodings used in NCN/NCNC~\cite{ncn} and Neo-GNN~\cite{yun2021neo}. However, fitting these methods into the formulation in Eq.~\eqref{eq:gen_pairwise} requires manually defining both $w(a, b, u)$ and $h(a, b, u)$. This constrains the information represented by $s(a, b)$ based on the choice of design. Motivated by this, in the next section we introduce our method that does not rely on a handcrafting both $w(a, b, u)$ and $h(a, b, u)$.

\subsection{Modeling Pairwise Encodings via Attention} \label{sec:att}

In Section~\ref{sec:gen_pairwise}, we introduced a general formulation for pairwise encodings in~Eq.~\eqref{eq:gen_pairwise}, which is able to capture a variety of different LP factors. However, it requires manually defining both terms in the equation. This limits our ability to customize the pairwise information to each target link. As such, we further aim to move beyond a one-size-fits-all pairwise encoding, and enable the model to produce customized pairwise encoding for each target link. This allows the model to handle more realistic graphs that often contain multiple prominent LP factors for different target links as shown in~\cite{mao2023revisiting}.  

% However, as 
% in its current form, Eq.~\eqref{eq:gen_pairwise} is still unable to adaptively choose which LP factors to consider for each target link.  This is because we must explicitly define the terms $w(a, b, u)$ and $h(a, b, u)$. This limits the ability to customize the pairwise information to each target link and move beyond a one-size-fits-all pairwise encoding. To produce a personalized pairwise encoding for each target link, we must consider how to automatically determine which type of LP factors to consider for each target link.

In particular, we consider the following question: {\it How can we model Eq~\eqref{eq:gen_pairwise} such that it can customize the used LP factors to each target link?} We consider parameterizing both $w(a, b, u)$ and $h(a, b, u)$. This allows us to learn how to personalize them to each target link. To achieve this, we leverage softmax attention~\cite{bahdanau2015neural}. This is due to its ability to dynamically learn the relevance of different nodes to the target link.  As such, for multiple target links, it can emphasize the contributions of different nodes, thereby flexibly modeling different LP factors. 
We note that since the attention is between different sequences (i.e., a target link and nodes), it can be considered a form of cross attention~\cite{vaswani2017attention}. 
% \ym{$\leftarrow$ why this is important?} \harry{it's not. i just think it's interesting, lol} 

To enhance the adaptability of the pairwise encoding for various links, it is essential to incorporate various types of information. This allows the attention mechanism to discern and prioritize relevant information for each target link, facilitating the effective modeling of diverse LP factors. 
% To encourage adjustability in modeling the pairwise encodings, we require the use of multiple types of information. By doing so, the attention mechanism has the ability to learn which type of information is relevant for each target link, thereby modeling multiple types of LP factors. 
In particular, we consider two types of information. The first is the {\bf feature information}. This includes the feature representation of both nodes in the target link and the node being attended to. The node features are included due to their role in link formation and relationship to structural information~\cite{murase2019structural}. Second, we consider the {\bf relative positional information}. The relative positional information reflects the relative position in the graph of a node $u$ to the target link $(a, b)$ in the local and global structural context. Due to the importance of local and global structural information~\cite{dong2017structural, huang2015triadic}, it is vital to properly encode both. By including both the structural and feature information, we are able to cover the space of potential LP factors (see Section~\ref{sec:related_work}).  

We denote the feature representation of a node $u$ as $\mathbf{h}_u$ and the relative positional encoding (RPE) as  $\mathbf{rpe}_{(a, b, u)}$. The node importance  $w(a, b, u)$ is modeled via attention as follows:
\begin{align} \label{eq:att1}
    &\tilde{w}(a, b, u) = \phi \left(\mathbf{h}_a, \mathbf{h}_b, \mathbf{h}_u, \: \mathbf{rpe}_{(a, b, u)} \right), \nonumber \\
    &w(a, b, u) = \frac{\text{exp}(\tilde{w}(a, b, u))}{\sum_{v \in \bar{\mathcal{V}}(a, b)}\text{exp}(\tilde{w}(a, b, u))},
\end{align}
where $\bar{\mathcal{V}}(a, b) = \mathcal{V} \setminus\{a, b\}$. The attention weight $w(a, b, u)$ can be considered as the impact of a node $u$ on $(a, b)$ relative to all nodes in $\mathcal{G}$. This allows the model to emphasize different LP factors for each target link. The node encoding $h(a, b, u)$ includes the features of node $u$ in conjunction with the RPE and is defined as: 
\begin{equation} \label{eq:att2}
    h(a, b, u) = \mathbf{W} \left[\mathbf{h}_u \: \concat \mathbf{rpe}_{(a, b, v)} \right].
\end{equation}
By substituting Eq.~\eqref{eq:att1} and Eq.~\eqref{eq:att2} into Eq.~\eqref{eq:gen_pairwise} we can compute the pairwise information $s(a, b)$. We further define $\phi( \cdot )$ in Eq.~\eqref{eq:att1} as the  GATv2~\cite{gatv2} attention mechanism. The detailed formulation is given in Appendix~\ref{sec:app_attention}. The feature representations $\mathbf{h}_i$ are computed via a MPNN. We use GCN~\cite{kipf2017semi} in this work. However, it is unclear how to properly encode the RPE of a node $u$ relative to $(a, b)$, $\mathbf{rpe}_{(a, b, u)}$. We aim to design the RPE to capture both the local and global structural relationship between the node and target link while also being efficient to calculate. In the next section, we discuss our solution for modeling $\mathbf{rpe}_{(a, b, u)}$. 

\subsection{PPR-Based Relative Positional Encodings} \label{sec:rpe} 

% \harry{Other papers with random-walk based PEs: (a)  GraphiT:
% Encoding Graph Structure in Transformer (b) Distance Encod-
% ing: Design Provably More Powerful Neural Networks for
% Graph Representation Learning}

% \

In this section, we introduce our strategy for computing the RPE of a node $u$ relative to a target link $(a, b)$.
Intuitively, we want the RPE to reflect the positional relationship between $u$ and $(a, b)$ such that different types of information (i.e., local vs. global) are encoded differently. Using Figure~\ref{fig:example} as an example, since node 3 is a CN of (source, 5) we expect it to have a much different relationship to the target link than node 6, which is a 2-hop neighbor of both nodes.
An enticing option is to use the double radius node labeling (DRNL) trick introduced by~\citet{seal}. However, \citet{chamberlain2022graph} have shown it to be prohibitively expensive to calculate for larger graphs. Furthermore, existing RPEs are typically infeasible to calculate on larger graphs as they often rely on pairwise distances or the eigenvectors of the Laplacian~\cite{gps}.  

As such, we seek an RPE that can both distinguish the relationship of different nodes to the target link while also being efficient to calculate. To motivate our RPE design, we draw inspiration from the following Proposition.

\begin{restatable}[]{proposition}{fta}
\label{th:ppr}
    Consider a target link $(a, b)$ and a node $u \in \mathcal{V} \setminus \{a, b\}$. The PPR~\cite{pagerank} score of a root node $i$ and target node $j$ with teleportation probability $\alpha$ is denoted by  $\text{ppr}(i, j)$. 
    Let $r_{a}^k (u)$ be the probability of a walk of length $k$ beginning at node $a$ and terminating at $u$. We define $r_{a, b}^k (u) := r_{a}^k (u) + r_{b}^k (u)$. We also define a weight $\gamma^k:=\alpha (1-\alpha)^{k}$ for all walks of length $k$. 
    The PPR scores, \(ppr(a,u)\) and \(ppr(b,u)\), along with the random walk probabilities of disparate lengths, are interconnected through the following  relationship.
    \begin{equation}
        \Gamma(a, b, u) = \text{ppr}(a, u) + \text{ppr}(b, u) = \sum_{k=0}^{\infty}  \gamma^{k} r_{a, b}^k (u).
    \end{equation}
\end{restatable}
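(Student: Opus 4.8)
The plan is to derive the claimed identity directly from the power-series (Neumann-series) expansion of the Personalized PageRank vector, so that the sum over walk lengths emerges automatically and the two PPR scores add termwise. First I would recall the defining fixed-point equation for the PPR row vector $\pi_v$ rooted at a node $v$ with teleportation probability $\alpha$, namely $\pi_v = \alpha\, e_v^{\top} + (1-\alpha)\,\pi_v P$, where $P = D^{-1}A$ is the random-walk transition matrix of $\mathcal{G}$ and $e_v$ is the one-hot vector for $v$ defined in the preliminaries. Rearranging this linear system gives $\pi_v\left(I - (1-\alpha)P\right) = \alpha\, e_v^{\top}$, and hence $\pi_v = \alpha\, e_v^{\top}\left(I - (1-\alpha)P\right)^{-1}$.

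The key step is to expand the matrix inverse as a geometric operator series. Since $P$ is row-stochastic we have $\|P\|_{\infty} = 1$, so $\|(1-\alpha)P\|_{\infty} = (1-\alpha) < 1$ whenever $\alpha \in (0,1)$; this guarantees that $\left(I - (1-\alpha)P\right)^{-1} = \sum_{k=0}^{\infty}(1-\alpha)^k P^k$ converges absolutely. Substituting this expansion yields $\pi_v = \sum_{k=0}^{\infty}\alpha(1-\alpha)^k\, e_v^{\top}P^k$. Reading off the $u$-th coordinate, I would then identify $\left[e_v^{\top}P^k\right]_u$ as exactly $r_v^k(u)$, the probability that a length-$k$ walk started at $v$ terminates at $u$ (with the $k=0$ term correctly giving the indicator $r_v^0(u)$), and recognize the scalar weight $\alpha(1-\alpha)^k$ as $\gamma^k$. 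This establishes the single-node identity $\text{ppr}(v,u) = \sum_{k=0}^{\infty}\gamma^k r_v^k(u)$.

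Finally I would apply this identity once with $v=a$ and once with $v=b$, add the two absolutely convergent expansions, and factor out the common weight to obtain $\text{ppr}(a,u) + \text{ppr}(b,u) = \sum_{k=0}^{\infty}\gamma^k\left(r_a^k(u) + r_b^k(u)\right) = \sum_{k=0}^{\infty}\gamma^k\, r_{a,b}^k(u)$, which is precisely $\Gamma(a,b,u)$ by the definition $r_{a,b}^k(u) := r_a^k(u) + r_b^k(u)$ given in the statement.

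The only genuinely technical point, and hence the main obstacle, is justifying the term-by-term validity of the Neumann-series expansion and the subsequent rearrangement: one must confirm that $(1-\alpha)\,\rho(P) < 1$ so the geometric operator series converges, and that absolute convergence licenses summing the $a$- and $b$-expansions termwise. Given the stochasticity of $P$ together with $\alpha > 0$ this is immediate, so the bulk of the argument is bookkeeping rather than a delicate estimate; the conceptual content is simply the standard walk-based interpretation of PPR as a $\gamma^k$-weighted sum over random walks of all lengths.
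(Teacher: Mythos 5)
Your proposal is correct and takes essentially the same route as the paper: both arguments rest on expanding PPR as a $\gamma^k$-weighted power series over walk-length probabilities and then adding the expansions for roots $a$ and $b$ termwise. The only difference is that you derive this expansion from the fixed-point equation $\pi_v = \alpha e_v^{\top} + (1-\alpha)\pi_v P$ via a Neumann series with an explicit convergence check, whereas the paper imports the identical expansion directly from \citet{chung2007heat}, so your version is slightly more self-contained but not a different proof.
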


The detailed proof is given in Appendix~\ref{sec:proof1}. From Proposition~\ref{th:ppr}, we can make the following observations: (1) The PPR scores encode the weighted sum of the probabilities of different length random walks connecting two nodes. (2) Walks of shorter length are given higher importance, as evidenced by the dampening factor $\gamma^k = \alpha (1-\alpha)^{k}$ which decays with the increase in $k$. These observations imply that -- {\bf a larger value of $\Gamma(a, b, u)$ correlates with the existence of many shorter walks connecting node $u$ to the both nodes in the target link $(a, b)$.} 

Therefore, the PPR scores can be used as an intuitive and useful method to understand the structural relationship between node $u$ and both nodes in the target link $(a, b)$. If both scores, $\text{ppr}(a, u)$ and $\text{ppr}(b, u)$, are high, there exists a high probability that many shorter walks connect $u$ to both nodes in the target link. This implies that node $u$ has a stronger impact on the nodes in the target link. On the other hand, if both PPR scores are low, there is likely very little relationship between $u$ and the target link. This allows for a convenient way of differentiating how a node structurally relates to the target link. Furthermore, we note that the PPR matrix can be efficiently pre-computed using the algorithm introduced by~\citet{andersen2006local}, allowing for easy computation and use.

Following this idea, to calculate the RPE of a node $u$, we use the PPR scores of a node $u$ relative to both nodes in the target link $(a,b)$. Instead of considering the sum of PPR scores as in Proposition~\ref{th:ppr}, we further parameterize $\Gamma(\cdot)$ via an MLP, 
\begin{equation} \label{eq:ppr1}
    \mathbf{rpe}_{(a, b, u)} = \text{MLP} \left( \text{ppr}(a, u), \text{ppr}(b, u) \right) .
\end{equation}
By introducing learnable parameters to $\Gamma(\cdot)$, it allows for the model learn the importance of individual PPR scores and how they interact with each other. 
% For example, if both scores are high (e.g., if $u$ is a CN) we would expect a different RPE than if only one is high. 
To ensure that Eq.~\eqref{eq:ppr1} is invariant to the order of the nodes in the target link, i.e., $(a, b)$ and $(b, u)$, we further set the RPE to be equal to the summation of the representations given by both $(a, b)$ and $(b, a)$:
\begin{equation} \label{eq:ppr2}
    \mathbf{\overline{rpe}}_{(a, b, u)} = \mathbf{rpe}_{(a, b, u)} + \mathbf{rpe}_{(b, a, u)}.
\end{equation}
% In this way, $\mathbf{rpe}_{(a, b, u)} = \mathbf{rpe}_{(b, a, u)}$. This is integral for undirected graphs where the order of the nodes should not matter. 
However, a concern with Eq.~\eqref{eq:ppr2} is that it is not guaranteed to be able to distinguish certain types of nodes from each other. 
% For example, when given a node $u$ that is a CN of $(a, b)$ and a node $v$ that isn't. If the PPR scores for $u$ and $v$ are the same, we will be unable to determine that node $u$ is a CN and $v$ isn't.
For example, it is necessary to clearly distinguish CNs from other nodes due to their important role in link formation~\cite{newman2001clustering}. To overcome this issue, we fit three separate MLPs for when $u$ is a: CN of $(a, b)$, a 1-hop neighbor of either $a$ and $b$, and a ${>}1$-hop neighbor of both $a$ and $b$. This ensures that we can properly distinguish between these three types of nodes. We verify the effectiveness of this design in~Section~\ref{sec:ablation}. Lastly, we note that while other work~\cite{mialon2021graphit, li2020distance} has considered the use of random-walk based positional encodings, they are only designed for use on the node-level and are unable to be used for link-level tasks like LP.

\subsection{Efficiently Attending to the Graph Context} \label{sec:nodes_attending}

The proposed attention mechanism in Section~\ref{sec:att} attends to all nodes in the graph, sans those in the link itself. This makes it difficult to scale to large graphs. Motivated by selective~\cite{maruf2019selective} and sparse~\cite{correia2019adaptively} attention, we opt to attend to only a small portion of the nodes.

%%%%%%%%%%%%%%%%%%%%%%%%%%%%%%%%%%%%%%%%%%%%%%%%%%%%%%%%%%%%%%%%%%%%%%%%%%
%%%%%%%%%%%%%%%%%%%%%%%%%%%%%%%%%%%%%%%%%%%%%%%%%%%%%%%%%%%%%%%%%%%%%%%%%%
%%%%%%%%%%%%%%%%%%%%%%%%%%%%%%%%%%%%%%%%%%%%%%%%%%%%%%%%%%%%%%%%%%%%%%%%%%
%%%% Below = New SECTION
%%%%%%%%%%%%%%%%%%%%%%%%%%%%%%%%%%%%%%%%%%%%%%%%%%%%%%%%%%%%%%%%%%%%%%%%%%
%%%%%%%%%%%%%%%%%%%%%%%%%%%%%%%%%%%%%%%%%%%%%%%%%%%%%%%%%%%%%%%%%%%%%%%%%%
%%%%%%%%%%%%%%%%%%%%%%%%%%%%%%%%%%%%%%%%%%%%%%%%%%%%%%%%%%%%%%%%%%%%%%%%%%

At a high level, we are interested in determining a subset of nodes $\hat{\mathcal{N}}(a, b) \in \mathcal{V}$ to attend to for the target link $(a, b)$. Our goal is to choose the set of nodes $\hat{\mathcal{N}}(a, b)$ such that they are {\bf(a)} few in number to improve scalability and {\bf (b)} provide important contextual information to the pair $(a, b)$ to best learn the pairwise information. This can be achieved by only considering all nodes where the importance of the node $u$ to the target link $(a, b)$ is considered high. Formally, we can write this as the following where $\mathcal{I}(a, b, u)$ is a function that denotes the importance of a node $u$ to the target link $(a, b)$:
\begin{equation} \label{eq:ppr_thresh_1}
    \hat{\mathcal{N}}(a, b) = \{u \in \mathcal{V}\setminus\{a, b\} \; | \; \mathcal{I}(a, b, u) > \eta \}.
\end{equation}
The threshold $\eta$ allows us to distinguish those nodes that are sufficiently important to the target link.
This allows for a simple and efficient way of determining the set $\hat{\mathcal{N}}(a, b)$. However, {\it what do we use to model the importance $\mathcal{I}(a, b, u)$?} For ease of optimization and better efficiency, we avoid parameterizing the function $\mathcal{I}(a, b, u)$. Instead, we want to choose a metric such that can properly serve as a proxy for the importance of a node $u$ to $(a, b)$ while also being concentrated in a small subset of nodes. Such a metric will allow Eq.~\eqref{eq:ppr_thresh_1} to choose a small but influential set of nodes to attend to.

A measure that satisfies both criteria is Personalized Pagerank (PPR)~\cite{pagerank}. In Section \ref{sec:rpe} we discussed that the PPR score can serve as a good tool to model the influence of a one node on another. Furthermore, existing work~\cite{gleich2015localization, nassar2015strong, andersen2006local} shows that the PPR scores tend to be highly localized in a small subset of nodes. Therefore by making $\mathcal{I}(a, b, u)$ contingent on the PPR scores of $(a, u)$ and $(b, u)$ we can extract a small but important set of nodes to attend to for the target link.

Following this idea, for a target link $(a, b)$, we keep all nodes whose PPR score is above some threshold $\eta$ relative to both nodes in the target link. As such, we only keep a node $u$ if it is related in some capacity to at least one of the nodes in the target link. Similarly to Section~\ref{sec:rpe}, we treat CN, 1-Hop, and ${>}1$-Hop nodes differently by applying a different threshold for them. The filtered node set for each category of nodes is given by: 
\begin{equation} \label{eq:ppr_threshold}
    \hat{\mathcal{N}}^{\pi}_{(a, b)} =  \{u \in \mathcal{N}^{\pi}_{(a, b)} \: | \: \text{ppr}(a, u) > \eta^{\pi}, \: \text{ppr}(b, u) > \eta^{\pi}  \},
\end{equation} 
where $\hat{\mathcal{N}}^{\pi}_{(a, b)}$ is the filtered node set for all nodes of the type~$\pi \in \{\text{CN}, 1{-}\text{Hop}, {>}1{-}\text{Hop} \}$ and $\eta^{\pi}$ is the corresponding PPR threshold. We note that while other work~\cite{bojchevski2020scaling, ying2018graph} has used PPR to filter the nodes on the {\it node-level}, no existing work has done so on the {\it link-level}.

We corroborate this design by demonstrating that LPFormer can achieve SOTA performance in LP (Section~\ref{sec:main_results}) while achieving a faster runtime than the second-best method, NCNC~\cite{ncn}, on denser graphs (Section~\ref{sec:runtime}). This is despite the fact that LPFormer can attend to a wider variety of nodes. We further show in Section~\ref{sec:exp_thresh} that the performance is stable with regards to the values of $\eta$ chosen, allowing us to easily choose a proper threshold on any dataset.

\subsection{LPFormer} \label{sec:lpformer}

We now define the overall framework -- LPFormer. The overall procedure is given in Figure~\ref{fig:framework}: {\bf (1)} We first learn node representations from the input adjacency and node features via an MPNN. We note that this step is agnostic to the target link. {\bf (2)} For a target link $(a, b)$ we extract the nodes to attend to, i.e. $\hat{\mathcal{N}}(a, b)$. This is done via the PPR thresholding technique defined in Section~\ref{sec:nodes_attending}. {\bf (3)} We apply $L$ layers of attention, using the mechanism defined in Section~\ref{sec:att}. The output is the pairwise encoding $s(a, b)$. {\bf (4)} We generate the prediction of the target link using three types of information: the element-wise product of the node representation, the pairwise encoding, and the number of CN, 1-Hop, and $>$1-Hop nodes identified by Eq.~\eqref{eq:ppr_threshold}. The score function is given by:
{\small
\begin{equation}
    p(a, b) = \sigma \left( \text{MLP} \left( \mathbf{h}_a \odot \mathbf{h}_b \concat s(a, b) \concat \lvert \hat{\mathcal{N}}^{\text{CN}}_{(a, b)} \rvert \concat \lvert \hat{\mathcal{N}}^{1}_{(a, b)} \rvert \concat \lvert \hat{\mathcal{N}}^{>1}_{(a, b)} \rvert \right) \right)
\end{equation}
}We demonstrate in Section~\ref{sec:ablation} that the inclusion of the node counts is helpful, as it provides complementary information to the pairwise encoding.

\section{Experiments} \label{sec:experiments}

\begin{table*}[t]
\centering

 \caption{Dataset statistics. The split ratio is the \% of samples for train/validation/test. 
 }
    \begin{tabular}{cccccccc}
    \toprule
     & Cora & Citeseer & Pubmed & ogbl-collab & ogbl-ddi & ogbl-ppa & ogbl-citation2 \\
     \midrule
    \#Nodes & \multicolumn{1}{r}{2,708} & \multicolumn{1}{r}{3,327} & \multicolumn{1}{r}{18,717} & \multicolumn{1}{r}{235,868} & \multicolumn{1}{r}{4,267} & \multicolumn{1}{r}{576,289} & \multicolumn{1}{r}{2,927,963} \\
    \#Edges & \multicolumn{1}{r}{5,278} & \multicolumn{1}{r}{4,676} & \multicolumn{1}{r}{44,327} & \multicolumn{1}{r}{1,285,465} & \multicolumn{1}{r}{1,334,889} & \multicolumn{1}{r}{30,326,273} & \multicolumn{1}{r}{30,561,187} \\
    % Mean Degree & \multicolumn{1}{r}{3.9} & \multicolumn{1}{r}{2.81} & \multicolumn{1}{r}{4.74} & \multicolumn{1}{r}{10.90} & \multicolumn{1}{r}{625.68} & \multicolumn{1}{r}{105.25} & \multicolumn{1}{r}{20.88} \\
    Split Ratio& \multicolumn{1}{r}{85/5/10}&\multicolumn{1}{r}{85/5/10} &\multicolumn{1}{r}{85/5/10} &\multicolumn{1}{r}{92/4/4} & \multicolumn{1}{r}{80/10/10} & \multicolumn{1}{r}{70/20/10}&\multicolumn{1}{r}{98/1/1}\\
    \bottomrule
    \end{tabular}
    \label{table:app_data}
\end{table*}

\begin{table*}[t!]
    % \small
    \centering
    \caption{Results on benchmark datasets. OOM is an out of memory error. Colored are the results ranked \colorfirst{first}, \colorsecond{second}, and  \colorthird{third}.}  \label{tab:main_results}

    \begin{tabular}{lcccccc|c}
    \toprule
         &
         \textbf{Cora} &  
         \textbf{Citeseer} & 
         \textbf{Pubmed} &
         \textbf{ogbl-collab} &
         \textbf{ogbl-ppa} &
         \textbf{ogbl-citation2} &
         \textbf{Mean Rank} 
         \\
          \cmidrule{2-7}
          Metric &
          MRR &
          MRR & 
          MRR &
          H@50 &
          H@100 &
          MRR & 
         \\ 
         
         \midrule
          
         \textbf{CN} & 
         20.99${\scriptstyle \pm 0.00}$& 
         28.34${\scriptstyle \pm 0.00}$& 
         14.02${\scriptstyle \pm 0.00}$&
         56.44${\scriptstyle \pm 0.00}$&
         27.65${\scriptstyle \pm 0.00}$&
         51.47${\scriptstyle \pm 0.00}$&
         11.0
         \\

        \textbf{AA} & 
        31.87${\scriptstyle \pm 0.00}$&
        29.37${\scriptstyle \pm 0.00}$&
        16.66${\scriptstyle \pm 0.00}$&
        64.35${\scriptstyle \pm 0.00}$&
        32.45${\scriptstyle \pm 0.00}$&
        51.89${\scriptstyle \pm 0.00}$&
        8.5
        \\

        \textbf{RA} &
        30.79${\scriptstyle \pm 0.00}$ &
        27.61${\scriptstyle \pm 0.00}$&
        15.63${\scriptstyle \pm 0.00}$& 
        64.00${\scriptstyle \pm 0.00}$&
        49.33${\scriptstyle \pm 0.00}$ & 
        51.98${\scriptstyle \pm 0.00}$&
        8.7
        \\ \midrule
          
        \textbf{GCN} & 
        32.50${\scriptstyle \pm 6.87}$& 
        50.01${\scriptstyle \pm 6.04}$&
        19.94${\scriptstyle \pm 4.24}$& 
        44.75${\scriptstyle \pm 1.07}$&
        18.67${\scriptstyle \pm 1.32}$&
        84.74${\scriptstyle \pm 0.21}$&
        8.0
         \\
        \textbf{SAGE} & 
        \colorsecond{37.83${\scriptstyle \pm 7.75}$}& 
        47.84${\scriptstyle \pm 6.39}$& 
        22.74${\scriptstyle \pm 5.47}$& 
        48.10${\scriptstyle \pm 0.81}$& 
        16.55${\scriptstyle \pm 2.40}$&
        82.60${\scriptstyle \pm 0.36}$&
        7.7
        \\ 
        \textbf{GAE} & 
        29.98${\scriptstyle \pm 3.21}$& 
        \colorthird{63.33${\scriptstyle \pm 3.14}$}& 
        16.67${\scriptstyle \pm 0.19}$& 
        OOM & 
        OOM & 
        OOM &
        NA
        \\ 
        \midrule

        \textbf{SEAL} & 
        26.69${\scriptstyle \pm 5.89}$& 
        39.36${\scriptstyle \pm 4.99}$ & 
        \colorthird{38.06${\scriptstyle \pm 5.18}$}& 
        64.74${\scriptstyle \pm 0.43}$& 
        48.80${\scriptstyle \pm 3.16}$& 
        {87.67${\scriptstyle \pm 0.32}$} &
        6.2
        \\ 
        
         \textbf{NBFNet} & 
         \colorthird{37.69${\scriptstyle \pm 3.97}$}& 
         38.17${\scriptstyle \pm 3.06}$&
         \colorfirst{44.73${\scriptstyle \pm 2.12}$}& 
         OOM&
         OOM&
         OOM&
         NA
         \\  

         \midrule

        \textbf{Neo-GNN} & 
        22.65${\scriptstyle \pm 2.60} $ &
        53.97${\scriptstyle \pm 5.88}$ &
        31.45${\scriptstyle \pm 3.17}$ &
        57.52${\scriptstyle \pm 0.37}$& 
        49.13${\scriptstyle \pm 0.60}$& 
        87.26${\scriptstyle \pm 0.84}$ &
        7.0
        \\ 
               
         \textbf{BUDDY} & 
         26.40${\scriptstyle \pm 4.40}$&
         59.48${\scriptstyle \pm 8.96}$&
         23.98${\scriptstyle \pm 5.11}$&
         \colorthird{65.94${\scriptstyle \pm 0.58}$}& 
         {49.85${\scriptstyle \pm 0.20}$}& 
         {87.56${\scriptstyle \pm 0.11}$} &
         \colorthird{5.7}
         \\

         \textbf{NCN} &
         32.93${\scriptstyle \pm 3.80}$&
         54.97${\scriptstyle \pm 6.03}$&
         35.65${\scriptstyle \pm 4.60}$&
         64.76${\scriptstyle \pm 0.87}$&
         \colorthird{61.19${\scriptstyle \pm 0.85}$}&
         \colorthird{88.09${\scriptstyle \pm 0.06}$}&
         \colorsecond{3.8}
         \\
         
         \textbf{NCNC} &
         29.01${\scriptstyle \pm 3.83}$&
         \colorsecond{64.03${\scriptstyle \pm 3.67}$}&
         25.70${\scriptstyle \pm 4.48}$&
         \colorsecond{66.61${\scriptstyle \pm 0.71}$}& 
         \colorsecond{61.42${\scriptstyle \pm 0.73}$}& 
         \colorsecond{89.12${\scriptstyle \pm 0.40}$}&
         \colorsecond{3.8}
         \\

         \midrule 
         \textbf{LPFormer} & 
         \colorfirst{39.42${\scriptstyle \pm 5.78}$}& 
         \colorfirst{65.42${\scriptstyle \pm 4.65}$}&
         \colorsecond{40.17${\scriptstyle \pm 1.92}$}& 
         % \colorfirst{66.92${\scriptstyle \pm 0.27}$}&
         \colorfirst{68.14${\scriptstyle \pm 0.51}$}&
         \colorfirst{63.32${\scriptstyle \pm 0.63}$}&
         \colorfirst{89.81${\scriptstyle \pm 0.13}$} &
         \colorfirst{1.2}
         \\ \bottomrule
\end{tabular}
\end{table*}

In this section, we conduct extensive experiments to validate the effectiveness of LPFormer. Specifically, we attempt to answer the following questions: {\bf (RQ1)} Can LPFormer consistently outperform baseline methods on a variety of different benchmark datasets? {\bf (RQ2)} Is LPFormer able to model a variety of different LP factors? {\bf (RQ3)} Can LPFormer be run efficiently on large dense graphs? We further conduct studies ablating each component of our model and analyzing the effect of the PPR-based threshold on performance.

\subsection{Experimental Settings}

\hspace{\parindent}{\bf Datasets}. We include Cora, Citeseer, and Pubmed~\cite{planetoid} and ogbl-collab, ogbl-ppa, ogbl-ddi, and ogbl-citation2~\cite{ogb}. Furthermore, for Cora, Citeseer, and Pubmed we experiment under a single fixed split (see Appendix~\ref{sec:planetoid_splits} for further discussion). The detailed statistics for each dataset are shown in Table~\ref{table:app_data}.

{\bf Baseline Models}. We compare LPFormer against a wide variety of baselines including: CN~\cite{newman2001clustering}, AA~\cite{adamic2003friends}, RA~\cite{zhou2009predicting}, GCN~\cite{kipf2017semi}, SAGE~\cite{hamilton2017inductive}, GAE~\cite{kipf2016variational}, SEAL~\cite{seal}, NBFNet~\cite{nbfnet}, Neo-GNN~\cite{yun2021neo}, BUDDY~\cite{chamberlain2022graph}, and NCNC~\cite{ncn}. Results on Cora, Citeseer, and Pubmed are taken from~\citet{li2023evaluating}. Results for the heuristic methods are from~\citet{ogb}. All other results are either from their respective study or~\citet{chamberlain2022graph}.

% {\bf Infrastructure}: All experiments were implemented and run using the PyTorch framework~\cite{paszke2019pytorch}. Multiple types of GPUs were used for running the experiments including a 32Gb Tesla V100, a 48Gb Nvidia RTX A6000 48Gb, and a 24Gb NVIDIA RTX A5000. 

{\bf Hyperparameters}: The learning rate is tuned from $\{1e^{-3}, 5e^{-3}\}$, the decay from $\{0.95, 0.975, 1 \}$, and the dropout from $[0, 0.7]$, and the weight decay from $\{0, 1e^{-4}, 1e^{-7} \}$. The size of the hidden dimension is set to 64 for ogbl-ppa and ogbl-citation2, 128 for Cora, Pubmed, and ogbl-collab, and 256 for Citeseer. Lastly, the PPR threshold is tuned from $\{1e^{-2}, 1e^{-3}, 1e^{-4} \}$.

{\bf Evaluation Metrics}. Each positive target link is evaluated against a set of given negative links. The rank of the positive link among the negatives is used to evaluate performance. The two types of metrics that are used to evaluate this ranking are Hits@K and MRR. For the OGB datasets we use the metric used in the original study. This includes Hits@50 for ogbl-collab, Hits@100 for ogbl-ppa and MRR for ogbl-citation2. For Cora, Citeseer, Pubmed we follow~\citet{li2023evaluating} and use MRR. Lastly, the same set of negative links is used for all positive links except on ogbl-citation2, where~\cite{ogb} provides a customized set of 1000 negatives for each individual positive link.

\subsection{Main Results} \label{sec:main_results}

We present the results of LPFormer compared with baselines on multiple benchmark datasets. Note that we omit ogbl-ddi from the main results due to recent issues discovered by~\citet{li2023evaluating} (see Appendix~\ref{sec:ddi} for more details). The results are shown in Table~\ref{tab:main_results}. We observe that LPFormer can achieve SOTA performance on 5/6 datasets, significantly outperforming other baselines. Moreover, LPFormer is also the most consistent of all the methods, achieving strong performance on all datasets. This is as opposed to previous SOTA methods, NCNC and BUDDY, which tend to struggle on Cora and Pubmed. We attribute the consistency of LPFormer to the flexibility of our model, allowing it to customize the LP factors needed to each link and dataset.

\subsection{Performance by LP Factor} \label{sec:exp_factor}

In this section, we measure the ability of LPFormer to capture a variety of different LP factors. To measure this, we identify all positive target links {\bf when there is only one dominant LP factor}. For example, one group would contain all target links where the only dominant factor is the local structural information. We focus on links that correspond to one of the three groups identified in~\cite{mao2023revisiting}: local structural information, global structural information, and feature proximity. 

We identify these groups by using popular heuristics as proxies for each factor. For local structural information, we use CNs~\cite{newman2001clustering}, for global structural information we use PPR~\cite{pagerank} as it's the most computationally efficient of all global methods, and for feature proximity, we use the cosine similarity of the features. Using these heuristics, we determine if only one factor is dominant by comparing the relative score of each heuristic. This is done by first computing the score for each factor $i$ for the target link $(a, b)$ -- $s^i(a, b)$. For each factor, we then compute the score corresponding to the $p$-th percentile among all links,  $\hat{s}^i$. We choose a larger value of $p$ (i.e. 90\%) such that a score $\geq \hat{s}^i$ indicates that a significant amount of pairwise information exists for that factor. For a single target link, we then compare the score of each factor $s^i(a, b)$ to $\hat{s}^i$. If $s^i(a, b) \geq \hat{s}^i$ is true {\bf for only one factor}, this implies that the score for only one factor is ``high''. Therefore there is a notable amount of pairwise information existing for only one factor for the link $(a, b)$. This ensures that only one factor is strongly expressed. If this is true, we then assign the target link $(a, b)$ to factor $i$. Please see Appendix~\ref{sec:app_factor_details} for a more detailed explanation.

We demonstrate the results on Cora, Citeseer, and ogbl-collab in Figure~\ref{fig:exp_factors}. We observe that LPFormer typically performs best for each individual LP factor on all datasets. Furthermore, it is also the most consistently well-performing on each factor as compared to other methods. For example, on Cora the other methods struggle for links that correspond to the feature proximity factor. LPFormer, on the other hand, is able to significantly outperform them on those target links, performing around 33\% better than the second best method. Lastly, we note that most methods tend to perform well on the links corresponding to the global factor, even if they don't explicitly model such information. This is caused by a strong correlation that tends to exist between local and global structural information, often resulting in considerable overlap between both factors~\cite{mao2023revisiting}.
% \ym{Some more analysis needed. Some results might be a bit weird: on ogbl-collab, ncnc works well on global ones while it only get local information.}
These results show that LPFormer can indeed adapt to multiple types of LP factors, as it can consistently perform well on samples belonging to a variety of different LP factors.
Additional results are given in Appendix~\ref{sec:app_factors_exp}.

\begin{figure*}[t]
\centering
    \begin{subfigure}{.333\textwidth}
      % \centering
      \includegraphics[width=0.999\linewidth]{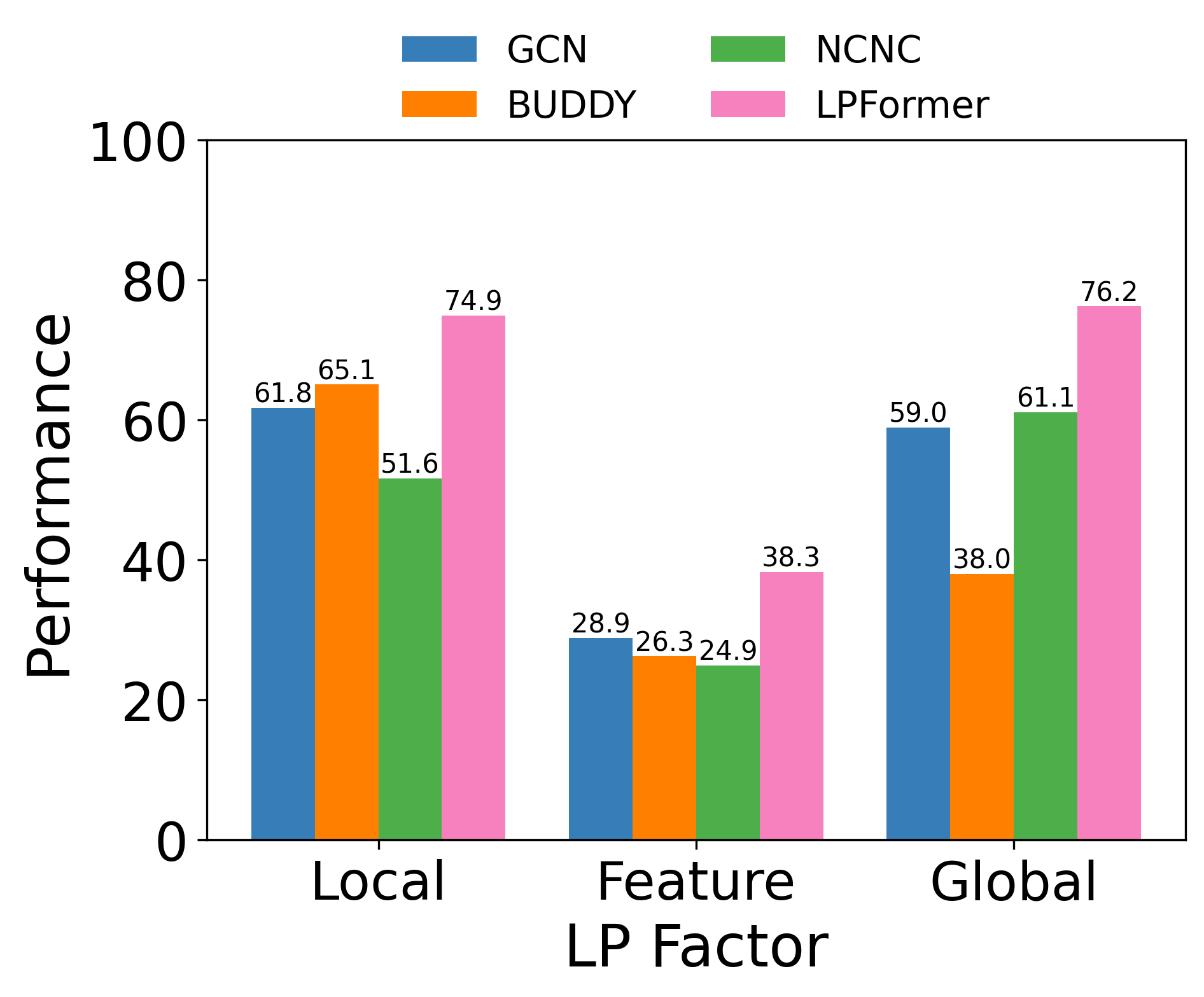}
      \caption{Cora}
      \label{fig:factor_cora}
    \end{subfigure}%
    \begin{subfigure}{.333\textwidth}
      % \centering
      \includegraphics[width=0.999\linewidth]{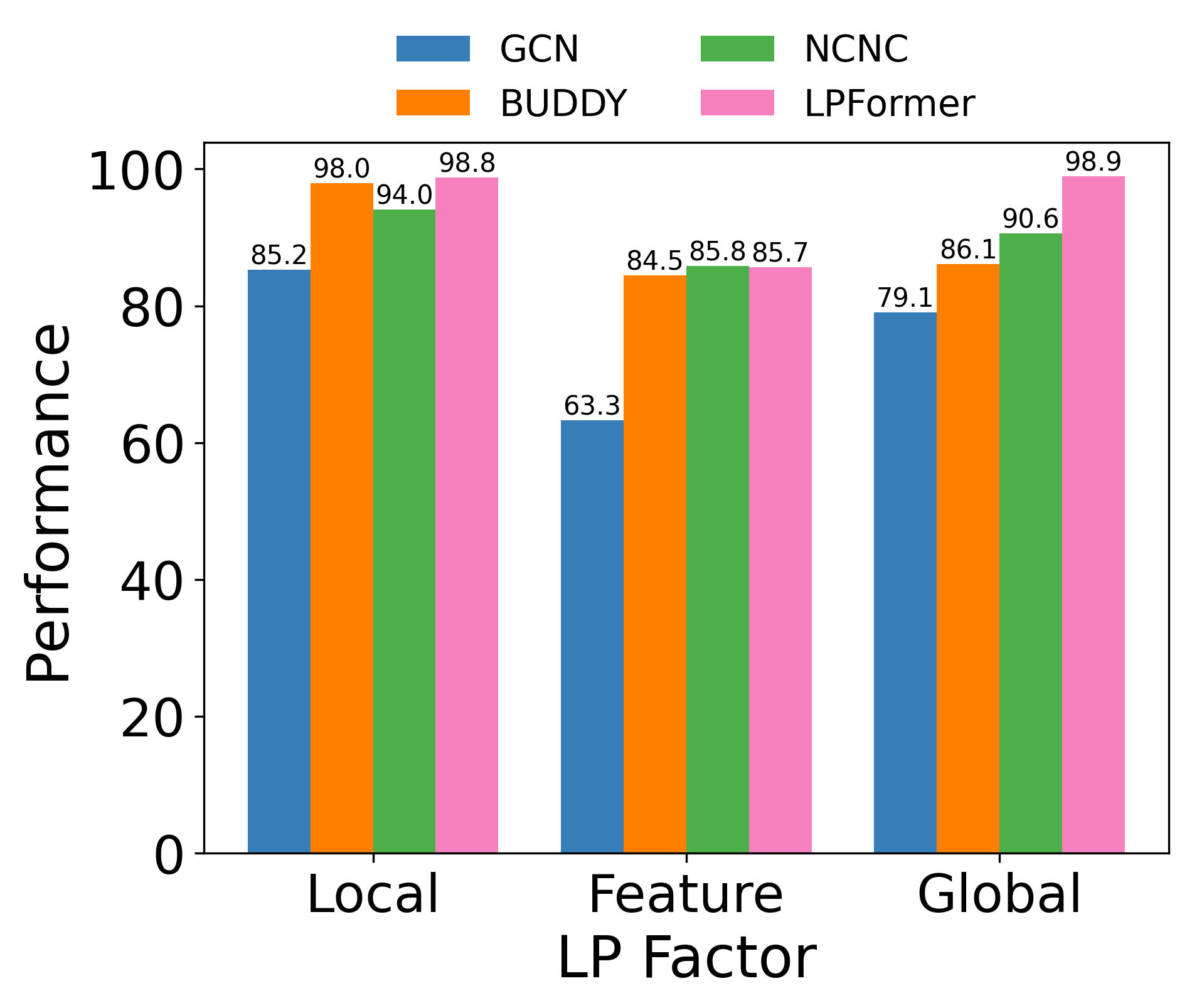}
      \caption{Citeseer}
      \label{fig:factor_citeseer}
    \end{subfigure}%
    \begin{subfigure}{.333\textwidth}
      % \centering
      \includegraphics[width=0.999\linewidth]{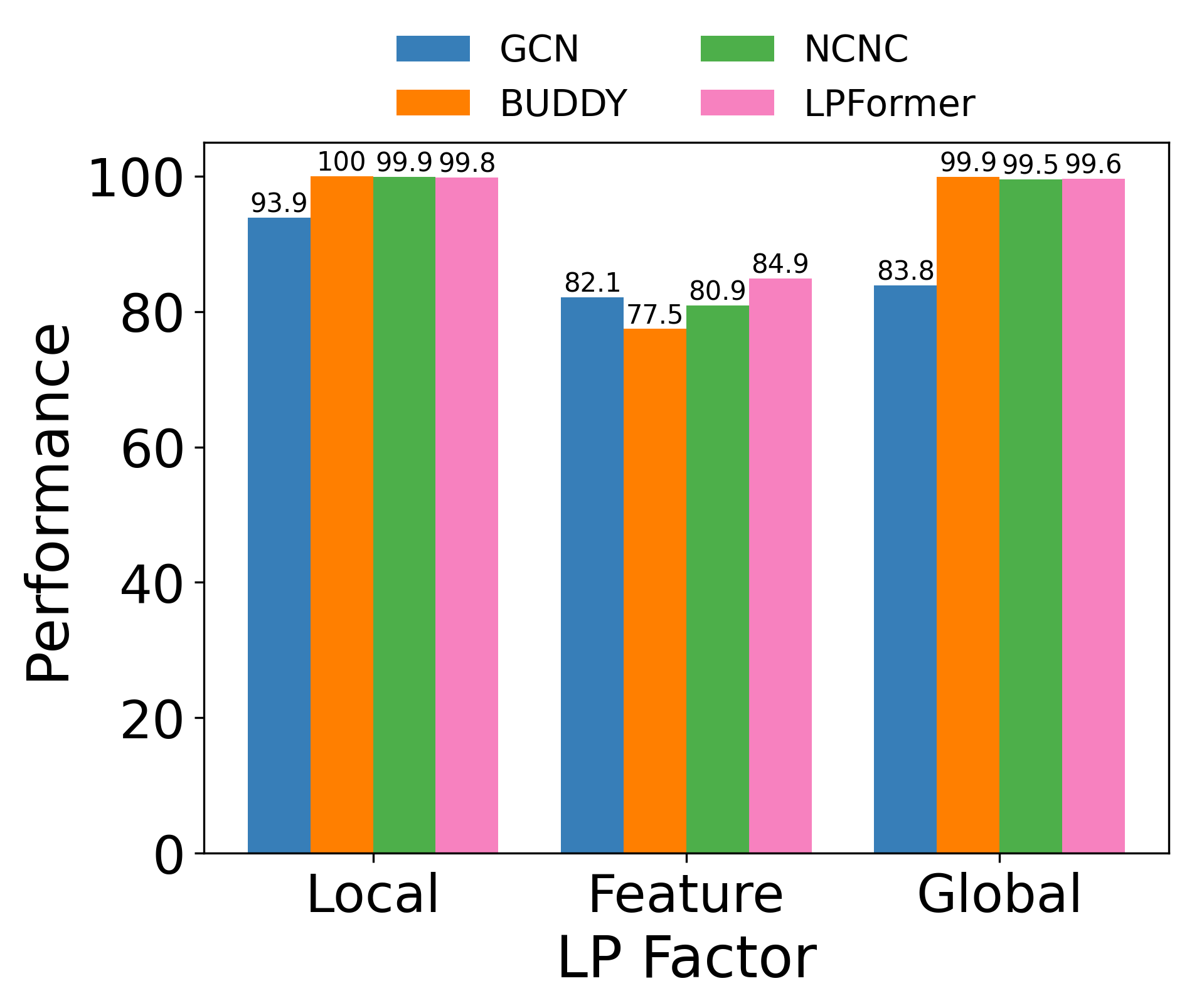}
      \caption{ogbl-collab}
      \label{fig:factor_collab}
    \end{subfigure}%
    % \vskip -0.5em
    \caption{Performance on links that contain one dominant LP factor. Results are on (a) Cora, (b) Citeseer, and (c) ogbl-collab. }

\label{fig:exp_factors}
% \vskip -0.5em
\end{figure*}

\subsection{Ablation Study} \label{sec:ablation}

We further include an ablation study to verify the effectiveness of the proposed components in LPFormer. In particular, we introduce 6 variants of LPFormer. (a) {\bf w/o Learnable Att}: No attention is learned. As such, we set all attention weights to 1 and remove the RPE. (b) {\bf w/o Features in Att}: We remove the node feature information from the attention mechanism. (c) {\bf w/o RPE in Att}: We remove the RPE from the attention mechanism. (d) {\bf w/o PPR RPE}: We replace the PPR-based RPE with a learnable embedding for each of CN, 1-Hop, and $>$1-Hop nodes. (e) {\bf w/o PPR RPE by Node Type}: We don't fit a separate function for each node type when determining the PPR RPE (see Section~\ref{sec:rpe}). Instead we use one for all nodes. (f) {\bf w/o Counts}: We remove the counts of different nodes from the scoring function. 

The results are shown in Table~\ref{tab:ablation}. We include ogbl-collab, ogbl-ppa, and Citeseer. We observe that ablating a component always decreases the performance. However, the magnitude of the decrease is dataset-dependent. For example, on ogbl-collab, ablating the feature information in the attention marginally affects the performance. However, on ogbl-ppa and Citeseer, removing the feature information results in a large decrease in performance. On the other hand, while removing learnable attention results in a modest decrease on ogbl-ppa, for the other two datasets we see a large drop. This highlights the importance of each component of our framework, as they are each necessary for consistently strong performance across multiple datasets.
% We further verify the importance of splitting the nodes into different categories (i.e., CN, 1-Hop, ${>}1$-Hop) when computing the RPE. When they aren't split and only one function is used to encode the PPR scores, the performance drops on each dataset. 

\begin{table}[h]
% \vspace{-0.1in}
    % \footnotesize
    % \tiny
    \small
	\centering
	\caption{Ablation Study on LPFormer}
	\adjustbox{max width=\textwidth}{
		\begin{tabular}{@{}  l | c c c @{}}
			\toprule
			\textbf{Method} & {\bf ogbl-collab} & {\bf ogbl-ppa} & {\bf Citeseer} \\
            \midrule
            w/o Learnable Att & $65.05{\scriptstyle \pm 0.50}$ & $62.77{\scriptstyle \pm 1.03}$ & $56.23{\scriptstyle \pm 1.75}$ \\
            w/o Features in Att & $68.04\scriptstyle \pm 0.79$ & $56.98{\scriptstyle \pm 1.55}$ & $53.40{\scriptstyle \pm 9.30}$ \\
            w/o RPE in Att & $65.26{\scriptstyle \pm 0.56}$ & $61.20{\scriptstyle \pm 0.69}$ & $56.70{\scriptstyle \pm 3.79}$ \\
            w/o PPR RPE & $67.09{\scriptstyle \pm 0.51}$ & $61.91{\scriptstyle \pm 1.22}$ & $51.96 {\scriptstyle \pm 15.2}$  \\
            w/o  PPR RPE by Node Type & $67.95{\scriptstyle \pm 0.54}$ & $62.92{\scriptstyle \pm 1.06}$ & $57.40{\scriptstyle \pm 5.71}$\\ 
            w/o Counts & $67.75{\scriptstyle \pm 0.41}$ & ${44.37 \scriptstyle \pm 1.89}$ & $54.39{\scriptstyle \pm 5.30}$  \\
            \midrule
            LPFormer & $\mathbf{68.14 {\scriptstyle \pm 0.51}}$ & $\mathbf{63.32 {\scriptstyle \pm 0.63}}$ & $\mathbf{65.42\scriptstyle \pm 4.65}$  \\
			\bottomrule
		\end{tabular}}
\label{tab:ablation}
% \vskip -1em
\end{table}

\begin{table}[h]
  % \vspace{-0.9 em}
    % \small
	\centering
	\caption{Effect of Varying the PPR Thresholds}
     % \vskip -1em
	\adjustbox{max width=\linewidth}{
		\begin{tabular}{@{}c | cc | cc }
			\toprule
    		   \multicolumn{1}{c}{\textbf{Threshold}} &
			   \multicolumn{2}{c}{\textbf{ogbl-collab}} &
			   \multicolumn{2}{c}{\textbf{ogbl-citation2}} \\
			    \cmidrule(l){2-3} \cmidrule(l){4-5}  
			   & 1-Hop  & ${>}1{-}\text{Hop}$ & 1-Hop  & ${>}1{-}\text{Hop}$
			   \\ \midrule
               1e-4 & $68.24{\scriptstyle \pm 0.25}$ & $67.73{\scriptstyle \pm 0.65}$ & $89.81{\scriptstyle \pm 0.13}$ & $89.14{\scriptstyle \pm 0.22}$ \\
               1e-2 & $67.60{\scriptstyle \pm 0.31}$ & $68.24{\scriptstyle \pm 0.25}$ & $89.49{\scriptstyle \pm 0.18}$ & $89.81{\scriptstyle \pm 0.13}$ \\
               1    & $67.08{\scriptstyle \pm 0.65}$ & $68.14{\scriptstyle \pm 0.51}$ & $89.49{\scriptstyle \pm 0.16}$ & $89.26{\scriptstyle \pm 0.39}$ \\
			\bottomrule
		\end{tabular}}
  % \vspace{-1.5 em}
  \label{tab:param_study_eps}
\end{table}

\subsection{Effect of the PPR Thresholds} \label{sec:exp_thresh}

We examine the effect of varying the PPR threshold for both 1-Hop and ${>}1{-}\text{Hop}$ nodes as described in Eq.~\eqref{eq:ppr_threshold}. The results for ogbl-collab and ogbl-citation2 are shown in Table~\ref{tab:param_study_eps}. When varying the 1-Hop threshold, we fix the value of the ${>}1{-}\text{Hop}$ threshold to 1e-2 for both datasets.  When varying the ${>}1{-}\text{Hop}$ threshold, we fix the value of the $1${}-Hop threshold to 1e-4 for both datasets.

We can observe that modifying the threshold has little effect on the underlying performance of the model. For both datasets, a value of 1e-2 works well for the ${>}1{-}\text{Hop}$ threshold and 1e-4 works well for the 1-Hop threshold. We typically find that setting both values to 1e-2 provides a good trade-off between performance and efficiency.

% \begin{figure}[H]
%       \centering
%       \includegraphics[width=0.75\linewidth]{}

%     \caption{Performance when varying the PPR threshold for ${>}1$-Hop nodes.}
%   \label{fig:ppr_thresh}
    
% \end{figure}

\subsection{Performance on HeaRT Setting}

We further test the performance of our method on the HeaRT~\cite{li2023evaluating} evaluation setting, which considers a more realistic and difficult evaluation setting for link prediction. This is done by introducing a much harder and more realistic set of negative samples during evaluation.  \citet{li2023evaluating} observe that this results in a large decrease in performance on all datasets. Furthermore, compared to the original evaluation setting, MPNNs designed specifically for link prediction are often outperformed by heuristics or other MPNNs.

The full results can be found in Table~\ref{tab:heart_full}. We observe that LPFormer performs considerably better than all other models. For instance, the mean rank of LPFormer is 3.1x better than the 2nd best-performing model, NCN. This indeed shows the advantage of LPFormer, as it can consistently achieve extraordinary performance across all datasets under the much more challenging HeaRT evaluation setting. This is as opposed to other LP-specific methods that often perform similarly to standard MPNN methods.

% \begin{table}[h]
% % \vspace{-0.1in}
%     % \footnotesize
%     % \tiny
%     % \small
% 	\centering
% 	\caption{Mean Rank under HeaRT (lower is better)}
%      % \vskip -1em
% 	\adjustbox{max width=\textwidth}{
% 		\begin{tabular}{@{}  l | c @{}}
% 			\toprule
% 			\textbf{Method} & {\bf Mean Performance Rank} \\
%             \midrule
%             GCN & 5.3 \\
%             SAGE & 5.6 \\
%             SEAL & 8.6 \\
%             BUDDY & 6.7 \\
%             NCN & 5.1 \\
%             \midrule
%             {\bf LPFormer} & {\bf 1.6} \\
% 			\bottomrule
% 		\end{tabular}}
% \label{tab:heart}
% % \vskip -0.5em
% \end{table}

\begin{table*}[t]
\centering
% \footnotesize
% \small
 \caption{Results (MRR) under HeaRT. Highlighted are the results ranked \colorfirst{first}, \colorsecond{second}, and  \colorthird{third}.}
 \begin{adjustbox}{}
\begin{tabular}{c|ccccccc|c}

\toprule
 \multirow{1}{*}{\bf Models} & \multicolumn{1}{c}{\bf Cora} & \multicolumn{1}{c}{\bf  Citeseer} & \multicolumn{1}{c}{\bf  Pubmed} & \multicolumn{1}{c}{\bf ogbl-collab} &\multicolumn{1}{c}{\bf ogbl-ddi}  &\multicolumn{1}{c}{\bf ogbl-ppa} &    \multicolumn{1}{c}{\bf ogbl-citation2} & {\bf Mean Rank}  \\ 
 
\midrule

\textbf{CN} & {9.78}  & {8.42}  & {2.28}  & {4.20} &  6.71 & 25.70  & 17.11 & 11.1 \\
\textbf{AA} &{11.91}  & {10.82}   &{2.63}  & {5.07} &{6.97} &{26.85} &{17.83} & 9.6  \\
\textbf{RA} &{11.81} & {10.84} & {2.47}  & \colorthird{6.29}  &{8.70}    &{28.34}  & 17.79 & 8.1 \\
\midrule
\textbf{GCN} & \colorthird{16.61 $\pm$ 0.30}     & 21.09 $\pm$ 0.88           & 7.13 $\pm$ 0.27         & {6.09 $\pm$ 0.38} & \colorfirst{13.46 $\pm$ 0.34} & 26.94 $\pm$ 0.48 & 19.98 $\pm$ 0.35 & \colorthird{4.7} \\
\textbf{SAGE}  & 14.74 $\pm$ 0.69        & 21.09 $\pm$ 1.15     & \colorsecond{9.40 $\pm$ 0.70}   & 5.53 $\pm$ 0.5 & 12.60 $\pm$ 0.72 & 27.27 $\pm$ 0.30  & \colorthird{22.05 $\pm$ 0.12} & \colorthird{4.7} \\
\textbf{GAE}  & \colorfirst{18.32 $\pm$ 0.41}   & \colorthird{25.25 $\pm$ 0.82}             & 5.27 $\pm$ 0.25             & {OOM} &{3.49 $\pm$ 1.73} & OOM & OOM  & NA     \\ \midrule
\textbf{SEAL}    & 10.67 $\pm$ 3.46      & 13.16 $\pm$ 1.66             & 5.88 $\pm$ 0.53 & \colorsecond{6.43 $\pm$ 0.32} & 9.99 $\pm$ 0.90 & {29.71 $\pm$ 0.71}& {20.60 $\pm$ 1.28} & 6.4\\
\textbf{NBFNet} & 13.56 $\pm$ 0.58          & 14.29 $\pm$ 0.80                      & >24h                       & OOM  & >24h      & OOM        & OOM        & NA \\
\midrule
\textbf{BUDDY} & 13.71 $\pm$ 0.59     & 22.84 $\pm$ 0.36        & 7.56 $\pm$ 0.18    & 5.67 $\pm$ 0.36 & 12.43 $\pm$ 0.50& 27.70 $\pm$ 0.33 & 19.17 $\pm$ 0.20 & 5.9 \\
\textbf{Neo-GNN}  & 13.95 $\pm$ 0.39     & 17.34 $\pm$ 0.84    & {7.74 $\pm$ 0.30}   & 5.23 $\pm$ 0.9 & 10.86 $\pm$ 2.16 & 21.68 $\pm$ 1.14 & 16.12 $\pm$ 0.25 & 7.4    \\
\textbf{NCN}  & 14.66 $\pm$ 0.95           & \colorfirst{28.65 $\pm$ 1.21}                   & 5.84 $\pm$ 0.22            & 5.09 $\pm$ 0.38 & \colorthird{12.86 $\pm$ 0.78}  & \colorsecond{35.06 $\pm$ 0.26}  & \colorsecond{23.35 $\pm$ 0.28} & \colorsecond{4.4}     \\
\textbf{NCNC}    & 14.98 $\pm$ 1.00  & {24.10 $\pm$ 0.65}                & \colorthird{8.58 $\pm$ 0.59}    & 4.73 $\pm$ 0.86 & >24h    & \colorthird{33.52 $\pm$ 0.26}  & 19.61 $\pm$ 0.54   & 4.8    \\ 
\midrule 
\textbf{LPFormer} & \colorsecond{16.80 $\pm$ 0.52 } & \colorsecond{26.34 $\pm$ 0.67} & \colorfirst{9.99 $\pm$ 0.52} & \colorfirst{7.62 $\pm$ 0.26} & \colorsecond{13.20 $\pm$ 0.54}& \colorfirst{40.25 $\pm$ 0.24} &   \colorfirst{24.70 $\pm$ 0.55} & \colorfirst{1.4}
\\
     
 \bottomrule
\end{tabular}
 \label{tab:heart_full}
 \end{adjustbox}
% \vspace{-0.25in}
\end{table*}

\subsection{Runtime Analysis} \label{sec:runtime}

In this section, we compare the runtime of LPFormer against NCNC, which is the strongest performing baseline. 
% To obtain a fair comparison we use the  the same batch size, hidden dimension size, and number of message-passing layers on each dataset for both methods.
The results are shown in Figure~\ref{fig:runtime} on all four OGB datasets
We further include the mean degree of each dataset in parentheses. We observe that LPFormer shines on denser datasets, taking significantly less time to train one epoch. This is despite that LPFormer can attend to nodes beyond the 1-hop radius of the target link. This underscores the importance of the PPR thresholding technique introduced in Section~\ref{sec:nodes_attending}, as it allows for efficient attention to a wider variety of nodes. Lastly, we note that LPFormer struggles on the ogbl-citation2 dataset due to the large number of nodes in the dataset (i.e., 2,927,963), which requires the sparse PPR matrix to be quite large. For future work we plan on exploring pre-computing the necessary PPR scores as an efficient pre-processing step, thereby removing the need to store the costly PPR matrix. Please see Appendix~\ref{sec:app_ppr_new} for more details.

\begin{figure}[H] 
    \centering      
       \includegraphics[width=0.85\linewidth]{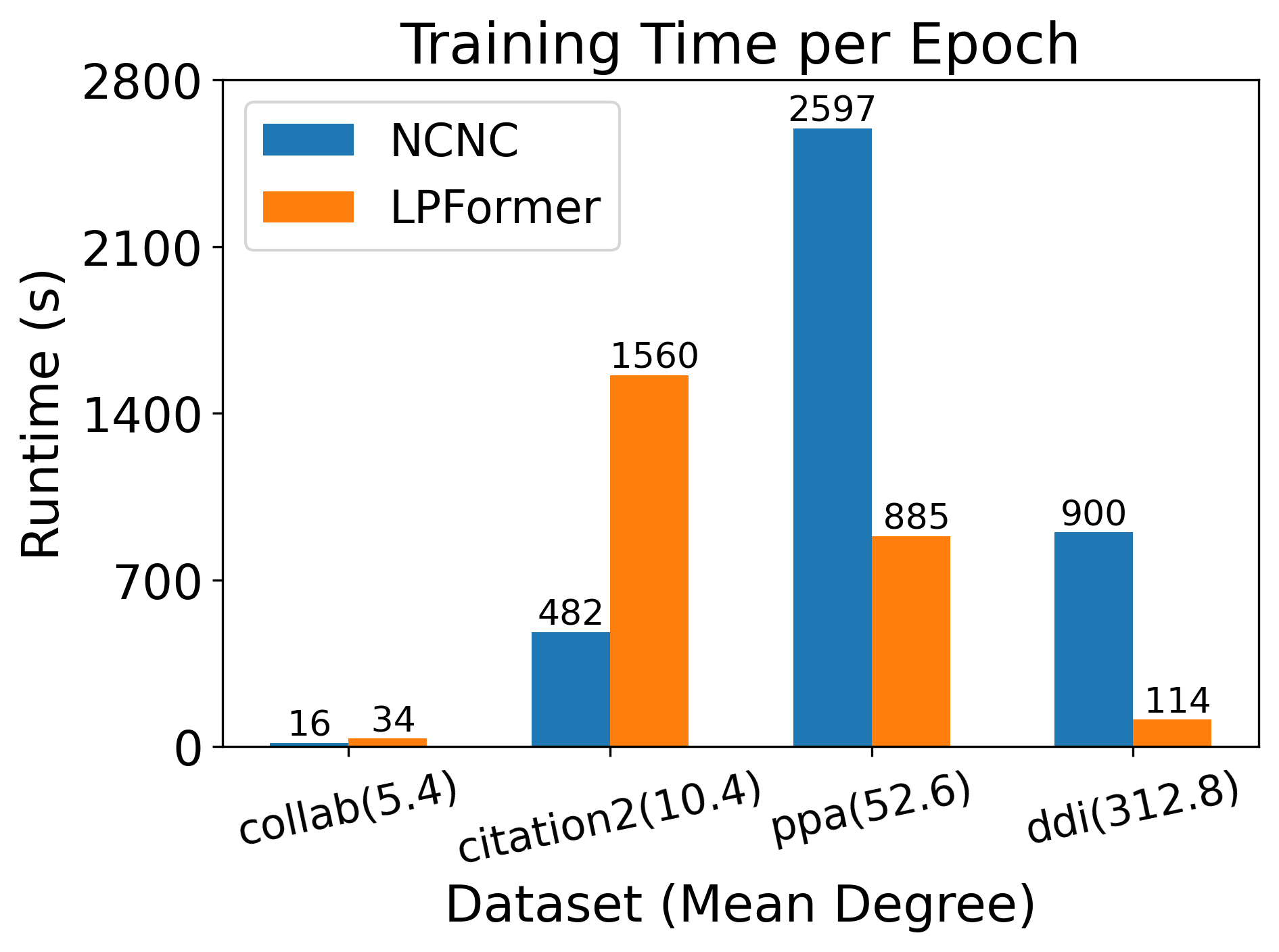}
    % \vskip -1em
    \caption{Comparison of training time of 1 epoch between LPFormer and NCNC. The mean degree is in parentheses.}
    \label{fig:runtime}
\end{figure}

\section{Conclusion}

In this paper we introduce a new framework, LPFormer, that aims to integrate a wider variety of pairwise information for link prediction. LPFormer does this via a specially designed graph transformer, which adaptively considers how a node pair relate to each other in the context of the graph. Extensive experiments demonstrate that LPFormer can achieve SOTA performance on a wide variety of benchmark datasets while retaining efficiency. We further demonstrate LPFormer's supremacy at modeling multiple types of LP factors. For future work, we plan on exploring other methods of incorporating multiple LP factors with an emphasis on global structural information. We also plan to investigate the potential of alternative relative positional encodings.

\begin{acks}
This research is supported by the National Science Foundation (NSF) under grant numbers CNS 2246050, IIS1845081, IIS2212032, IIS2212144, IOS2107215, DUE 2234015, DRL 2025244 and IOS2035472, the Army Research Office (ARO) under grant number W911NF-21-1-0198, the Home Depot, Cisco Systems Inc, Amazon Faculty Award, Johnson\&Johnson, JP Morgan Faculty Award and SNAP.
\end{acks}

% \newpage
\bibliographystyle{ACM-Reference-Format}
\bibliography{ref}

% \newpage
\onecolumn
\appendix

\section{Existing Formulations of Pairwise Encodings} \label{sec:app_pair_encoding}

In this section we give an overview of existing formulations of pairwise encodings using in DP-MPNNs. The standard formulation of DP-MPNNs is given in Eq.~\ref{eq:gnn_pairwise} where $s(a, b)$ is the pairwise encoding. We briefly describe other existing solutions below:
\vskip 1em
\noindent {\bf NCN}~\cite{ncn}: NCN only considers the CNs of the target link $(a, b)$ by summing the node representation of each. The pairwise encoding, $s(a, b)$, is written as:
\begin{equation}
    s(a, b) = \sum_{u \in \mathcal{N}^{\text{CN}}_{(a, b)}} \mathbf{h}_u,
\end{equation}
where $\mathbf{h}_u$ is the node representation encoded by a MPNN.

\vskip 1em
\noindent {\bf NCNC}~\cite{ncn}: NCNC extends NCN by further considering the 1-hop neighbors of the node pair that aren't CNs. To account for the difference, they are weighted by the probability of they themselves being CNs of the other node in the pair. This is given for a target link $(a, b)$ as:
\begin{equation}
    s(a, b) = \sum_{u \in \mathcal{V}} w(a, b, u) \: \mathbf{h}_u,
\end{equation}
where 
\begin{equation}
        w(a, b, u) = \left\{\begin{array}{ll}
                            1, &\text{when } u \in \mathcal{N}^{\text{CN}}_{(a, b)} \\
                            \text{NCN}(A, X, b, u) &\text{when } u \in \mathcal{N}(a) \\
                            \text{NCN}(A, X, a, u) &\text{when } u \in \mathcal{N}(b) \\
                            0, &\text{else } 
                            \end{array}\right\}.
\end{equation}
This weighting scheme ensures that CNs play a larger role in the pairwise information than non-CNs.
\vskip 1em
\noindent {\bf BUDDY}~\cite{ncn}: BUDDY considers counting the number of nodes that correspond to different labels given by the double radius node labeling trick~\cite{labeling_trick}. We first define the number of nodes that are a distance $d_a$ and $d_b$ from nodes $a$ and $b$ as $\mathcal{A}_{ab}[d_a, d_b]$. We further define the number of nodes where $\text{max}(d_u, d_v) > k$ as $\beta_{ab}[d]$. The pairwise encoding concatenates the counts belonging to all combination of $d=1 \cdots k$. The counts are estimated using subgraph sketching algorithms~\cite{flajolet2007hyperloglog, broder1997resemblance} and are denoted $\hat{\mathcal{A}}$ and $\hat{\mathcal{B}}$. The pairwise encoding for a target link $(a, b)$ is given by the following where $[k] = \{1 \cdots k\}$:
\begin{align}
    s^{\hat{\mathcal{A}}}(a, b) &= \concat_{d_a, d_b \in [k]} \hat{\mathcal{A}}_{ab}[d_a, d_b], \\
     s^{\hat{\mathcal{\beta}}}(a, b) &= \concat_{d \in [k]} \hat{\beta}_{ab}[d], \\   
     s(a, b) &= s^{\hat{\mathcal{A}}}(a, b) \concat s^{\hat{\mathcal{\beta}}}(a, b).
\end{align}

\vskip 1em
\noindent {\bf Neo-GNN}~\cite{ncn}: Neo-GNN considers the higher-order neighbor overlap between two nodes. This is done by first learning a structural representation for each node $i$, $x_i^{struct}$. This is given by:
\begin{equation}
    x_i^{struct} = f_1 \left( \sum_{j \in \mathcal{N}(i)} f_2 \left(A_{ij} \right)\right).
\end{equation}
To consider the $L$-hop structural information, the structural representations are diffused over $L$ hops and weighted by a hyperparameter $\beta$: 
\begin{align}
    Z &= \text{MLP} \left( \sum_{l=1}^L \beta^{l-1} A^l X^{struct} \right), \\
    &\text{where} \:\:\: X=\text{diag}(x^{struct}).
\end{align}
The pairwise encoding $s(a, b)$ is the dot product of both the final representations, 
\begin{equation}
    s(a, b) = z_a^T z_b.
\end{equation}

\section{Special Cases of the General Pairwise Encoding} \label{sec:app_gen_formula}

In this section we demonstrate that multiple popular heuristics and pairwise encodings can be formulated as special cases of the general pairwise encoding given in Eq.~\eqref{eq:gen_pairwise}.
\vskip 1em
\noindent{\bf Common Neighbors (CNs)}~\cite{newman2001clustering}: The CNs of a pair of nodes $(a, b)$ is defined the overlapping 1-hop neighbors of both nodes:
\begin{equation}
    \mathcal{N}^{\text{CN}}_{(a, b)} = \mathcal{N}(a) \cap \mathcal{N}(b).
\end{equation}
Eq.~\eqref{eq:gen_pairwise} is equal to the CNs when $h(a, b, u)=1$ and $w(a, b, u)$ is:
\begin{equation} \label{eq:cn_weight2}
    w(a, b, u) = \left\{\begin{array}{ll}
                            1, &\text{when } u \in \mathcal{N}(a) \cap \mathcal{N}(b) \\
                            0, &\text{else } 
                            \end{array}\right\}.
\end{equation}

\noindent{\bf Adamic-Adar (AA)}~\cite{adamic2003friends}: AA is defined as the reciprocal log-degree weighted CN score where $d_u$ is the degree of node $u$:
\begin{equation}
    \text{AA}(a, b) = \sum_{u \in \mathcal{N}^{\text{CN}}_{(a, b)}} \frac{1}{\text{log}(d_u)}.
\end{equation}
Eq.~\eqref{eq:gen_pairwise} can be rewritten as the AA when $h(a, b, u)=1/\text{log}(d_u)$ and $w(a, b, u)$ is equal to Eq.~\eqref{eq:cn_weight2}.
\vskip 1em
\noindent{\bf Resource Allocation (RA)}~\cite{zhou2009predicting}: RA is defined as the reciprocal degree weighted CN score:
\begin{equation}
    \text{RA}(a, b) = \sum_{u \in \mathcal{N}^{\text{CN}}_{(a, b)}} \frac{1}{d_u}.
\end{equation}
Eq.~\eqref{eq:gen_pairwise} can be rewritten as the AA when $h(a, b, u)=1/d_u$ and $w(a, b, u)$ is equal to Eq.~\eqref{eq:cn_weight2}.
\vskip 1em
\noindent{\bf Katz Index}~\cite{katz1953new}: The Katz index is a global structural measure. It is defined as weighted summation of the number of paths of different lengths connecting $a$ and $b$. It is given by the following where the decay weight $\beta \in [0, 1]$,
\begin{equation}
    \text{Katz}(a, b) = \sum_{l=1}^{\infty} \beta^l A_{a, b}^l.
\end{equation}
This is equivalent to Eq.~\eqref{eq:gen_pairwise} when: 
\begin{equation}
    w(a, b, u) = \sum_{l=1}^{\infty} \beta^l e_a^T A^l,
\end{equation}
where $e_i \in \mathbb{B}^{\lvert \mathcal{V} \rvert}$ is a one-hot vector for a node $i$. We further set,
\begin{equation} \label{eq:b_equals_u}
     h(a, b, u) = \left\{\begin{array}{ll}
                            e_b^T, &\text{when } u=b \\
                            \mathbf{0}, &\text{else } 
                            \end{array}\right\}.
\end{equation}
% \begin{equation} \label{eq:b_equals_u}
%     w(a, b, u) = \left\{\begin{array}{ll}
%                             e_a^T, &\text{when } b =u \\
%                             0, &\text{else } 
%                             \end{array}\right\},
% \end{equation}
% and
% \begin{equation}
%     h(a, b, u) = \sum_{l=1}^{\infty} \beta^l A^l e_u,
% \end{equation}
\vskip 1em
\noindent{\bf Personalized Pagerank (PPR) Score}~\cite{pagerank}: The personalized pagerank score is the pagerank score localized to a root node $u$. The localization is via a teleportation probability $\alpha$ that transports the random walk back to the root node. We show that Eq.~\eqref{eq:gen_pairwise} can be rewritten as the PPR score when setting $h(a, b, u)$ equal to~\eqref{eq:b_equals_u} and, following \citet{chung2007heat}, setting $w(a, b, u)$ to:
\begin{equation}
    w(a, b, u) = \alpha \sum_{l=0}^{\infty} (1 - \alpha)^l e_a^T (D^{-1} A)^l.
\end{equation}

\noindent{\bf Feature Similarity}: The feature similarity of the pair of nodes $(a, b)$ is expressed by $\text{dis}(\mathbf{x}_a, \mathbf{x}_b)$ where $\mathbf{x}_a$ are the node features of node $a$ and $\text{dis}(\cdot)$ is a distance function (e.g., euclidean distance). This can be rewritten as Eq.~\eqref{eq:gen_pairwise} by substituting:
\begin{equation}
    w(a, b, u) = \text{dis}(\mathbf{x}_a, \mathbf{x}_u),
\end{equation}
and $h(a, b, u) = e_b^T$ where $e_i \in \mathbb{B}^{\lvert \mathcal{V} \rvert}$ is a one-hot vector for a node $i$.
% \vskip 1em
% \noindent{\bf BUDDY}~\cite{chamberlain2022graph}: \harry{todo!!!!}
\vskip 1em
\noindent{\bf NCN}~\cite{ncn}: The pairwise encoding used in NCN is defined as the summation of the representations for the CNs of a link. Eq.~\eqref{eq:gen_pairwise} can be rewritten as NCN when $w(a, b, u)$ is equal to Eq.~\eqref{eq:cn_weight2}. $h(a, b, u)$ is equal to the node representation $u$ encoded by a MPNN, i.e., $h(a, b, u) = \mathbf{h}_u$ where $H = \text{MPNN}(A, X)$.
\vskip 1em
\noindent{\bf NCNC}~\cite{ncn}: NCNC extends NCNC by further weighting the 1-hop (non-CN) by their probability of linking to the other nodes. Given Eq.~\eqref{eq:gen_pairwise},  the weight $w(a, b, u)$ is equal to following where 1-hop neighbors are weighted by their probability of linking with the other node:
\begin{equation}
    w(a, b, u) = \left\{\begin{array}{ll}
                            1, &\text{when } u \in \mathcal{N}^{\text{CN}}_{(a, b)} \\
                            \text{NCN}(A, X, b, u) &\text{when } u \in \mathcal{N}(a) \\
                            \text{NCN}(A, X, a, u) &\text{when } u \in \mathcal{N}(b) \\
                            0, &\text{else } 
                            \end{array}\right\}.
\end{equation}
$\text{NCN}(A, X, a, u)$ is the probability of $a$ and $u$ being linked using the NCN model. We further define $h(a, b, u) = \mathbf{h}_u$.

\vskip 1em

\noindent {\bf Neo-GNN}~\cite{yun2021neo}: The pairwise encoding used in Neo-GNN considers the higher-order neighborhood overlap between two nodes. The formulation is given in Section~\ref{sec:app_gen_formula}. When $l=1$, it can be expressed using Eq.~\eqref{eq:gen_pairwise} by setting:
\begin{equation}
    h(a, b, u) = f_1 \left( \sum_{v \in \mathcal{N}(u)} f_2 \left(A_{uv} \right)\right)^2,
\end{equation}
and 
\begin{equation}
    w(a, b, u) =  \left\{\begin{array}{ll}
                            1, &\text{when } u \in \mathcal{N}^{\text{CN}}_{(a, b)} \\
                            0, &\text{else } 
                            \end{array}\right\}.
\end{equation}

\section{Proof of Proposition~\ref{th:ppr}} \label{sec:proof1}

% \begin{proposition} 
%     Consider a target link $(a, b)$. Furthermore, let $\text{ppr}(i, j)$ be the PPR~\cite{pagerank} score of a root node $i$ and target node $j$ with the teleportation probability $\alpha$.
%     Given a node $u \in \mathcal{V} \setminus \{a, b\}$, the function $\Gamma(a, b, u) = \frac{1}{2} \left( \text{ppr}(a, u) + \text{ppr}(b, u) \right)$ represents the mean number of weighted walks that begin with equal probability from node $a$ or $b$ and terminates at node $u$. The weight of a walk of length $k$ is given by $\gamma^k = (1-\alpha)^{k}$.
% \end{proposition}
\fta*

\begin{proof}
    Per~\citet{chung2007heat}, the PPR vector for a root node $s$, $\text{pr}_{s}$, is equivalent to:
    \begin{equation} \label{eq:ppr_diffusion}
        \text{pr}_{s} = \alpha \sum_{k=0}^{\infty} (1-\alpha)^k W^k x_s ,
    \end{equation}
    where $W$ is a the random walk matrix and $x_s$ is a preference vector that is a one-hot vector for element $s$. We note that $\text{pr}_{s}(t)$ represents the landing probability of node $t$ given the root node $s$. As such, by definition, $\text{pr}_{s}(t) = \text{ppr}(s, t)$. Furthermore, it is clear that $r_s^k = W^k x_s \in \mathbb{R}^{\mathcal{V}}$ represents the probability of a walk of length $k$ beginning at node $s$ and stop all other nodes, individually. Also, the probabilities of all walks of length $k$ are weighted by $\gamma^k = \alpha (1-\alpha)^{k}$.
    $\Gamma \left(a, b, u\right)$ can be obtained by first taking the sum of the PPR vectors for nodes $a$ and $b$,
    \begin{align} \label{eq:ppr_mean}
         \text{pr}_{a} + \text{pr}_{b} &=  \alpha \sum_{k=0}^{\infty} (1-\alpha)^k W^k x_a + \alpha \sum_{k=0}^{\infty} (1-\alpha)^k W^k x_b, \nonumber \\
         \text{pr}_{a, b}    &=  \alpha \sum_{k=0}^{\infty} (1-\alpha)^k W^k \left( x_a + x_b \right) ,
    \end{align}
    where $\text{pr}_{a, b} = \text{pr}_{a} + \text{pr}_{b}$. 
    From this, we can express $\Gamma(a, b, u)$ as:
    \begin{align}
        \Gamma(a, b, u) &= \text{ppr}(a, u) + \text{ppr}(b, u), \nonumber \\
                        &= \text{pr}_{a, b}(u), \\
                        &= \text{pr}_{a}(u) + \text{pr}_{b}(u), \nonumber
    \end{align}
    which as shown in Eq.~\eqref{eq:ppr_mean} is equivalent to the probability of a walk that originates from either node $a$ or $b$ and terminates at node $u$.  This completes the proof.
\end{proof}

\section{Attention Formulation} \label{sec:app_attention}

For a target link $(a, b)$, LPFormer attends to the nodes in the set $\bar{\mathcal{V}}(a, b)$. The attention mechanism used in LPFormer is defined in Section~\ref{sec:framework} as follows where $w(a, b, u)$ is the attention weight of $u$ to the target link and $\bar{\mathcal{V}}(a, b) = \mathcal{V} \setminus\{a, b\}$:
\begin{align} \label{eq:app_att}
    &\tilde{w}(a, b, u) = \phi \left(\mathbf{h}_a, \mathbf{h}_b, \mathbf{h}_u, \: \mathbf{rpe}_{(a, b, u)} \right), \nonumber \\
    &w(a, b, u) = \frac{\text{exp}(\tilde{w}(a, b, u))}{\sum_{v \in \bar{\mathcal{V}}(a, b)}\text{exp}(\tilde{w}(a, b, u))}.
\end{align}
The function $\phi(\cdot)$ is modeled via the attention mechanism defined in GATv2~\cite{gatv2}. We define $a \in \mathbb{R}^{2d'}$ and $W \in \mathbb{R}^{d \times d'}$. The raw attention weights are then given by:
\begin{align}
    \tilde{w}(a, b, u) = \mathbf{a}^T \: \text{LeakyReLU} \left[W \:\mathbf{h}_a \concat W \: \mathbf{h}_b \concat W \: \mathbf{h}_u \concat \mathbf{rpe}_{(a, b, u)} \right].
\end{align}
The final attention weights, ${w}(a, b, u)$, are given by passing $\tilde{w}(a, b, u)$ through a softmax activation layer.

\section{Additional Experimental Details} \label{sec:app_exp}

% \subsection{Additional Experimental Settings}  \label{sec:app_exp_settings}
% We further discuss the experimental settings used in Section~\ref{sec:experiments}. The code is available at \href{https://github.com/HarryShomer/LPFormer}{https://github.com/HarryShomer/LPFormer}.
% \vskip 1em

\subsection{Planetoid splits} \label{sec:planetoid_splits}

We note that for each of Cora, Citeseer, Pubmed we use a fixed split. This follows the recent work of~\cite{li2023evaluating}. \citet{li2023evaluating} observe that for Cora, Citeseer, Pubmed there exists no unified data split between studies. They find that while recent work~\cite{chamberlain2022graph, ncn} use 10 random splits, prior work~\cite{nbfnet, velickovic2017graph} use a fixed split and train over 10 random seeds. Furthermore, there exists discrepancies in the preprocessing between those works that use the random splits. \citet{chamberlain2022graph} only use the largest connected component of each dataset while \citet{ncn} use the whole dataset. This makes any comparison of the published results difficult. Due to these discrepancies, we use the performance on the fixed split given by~\citet{li2023evaluating}, as {\it it's the only split where all methods are evaluated and compared under the same setting}.  

\subsection{Omission of ogbl-ddi under the Existing Evaluation} \label{sec:ddi}

We further omit the results of ogbl-ddi in Table~\ref{tab:main_results}. This is due to the observation made by \citet{li2023evaluating} that there exists a poor relationship between the validation and test performance. This extends to recent pairwise MPNNs, including NCN~\cite{ncn}, Neo-GNN~\cite{yun2021neo}, and BUDDY~\cite{chamberlain2022graph}. This makes tuning on the validation set difficult, as it doesn't guarantee good test performance. Due to this, they observe that when tuning on a fixed set of hyperparameter ranges, they are unable to achieve comparable results to the reported performance. Often they observe that the performance is actually much lower. Due to these concerns we believe ogbl-ddi is not suitable for the task of transductive link prediction and don't report the performance. For more details and discussion, please see Appendix D in \citet{li2023evaluating}. However, they show that this problem does not afflict ogbl-ddi under the newly proposed HeaRT~\cite{li2023evaluating} evaluation setting. As such, we further include the results for our method under HeaRT in Table~\ref{tab:heart_full}.

\subsection{Computation of the PPR Matrix} \label{sec:app_ppr}

We compute the PPR matrix via the efficient approximation algorithm introduced by~\citet{andersen2006local}. The estimation is controlled by a tolerance parameter $\epsilon$. The parameter $\epsilon$ controls both the speed of computation and the sparsity of the solution (i.e., a higher value of $\epsilon$ will produce a sparser PPR matrix). We use: $\epsilon=1e^{-7}$ for Cora and Citeseer, $\epsilon=5e^{-5}$ for ogbl-collab and ogbl-ppa, $\epsilon=1e-5$ for Pubmed, and $\epsilon=5e^{-3}$ for ogbl-Citation2. The value of $\epsilon$ is chosen as a trade-off between accuracy and sparsity to allow for ease of storage in GPU memory.

\subsection{Splitting Target Links by LP Factor} \label{sec:app_factor_details}

In Section~\ref{sec:exp_factor} we demonstrate the performance on samples that correspond to a single LP factor. In this section we further detail the algorithm used to determine the set of samples corresponding to each factor. We consider the three main factors: local structural information, global structural information, and feature proximity. We measure each using a single representative heuristic: CNs~\cite{newman2001clustering} for local information, PPR~\cite{pagerank} for global information, and cosine feature similarity for feature proximity. For each sample, we check if the score is only high in {\bf one} heuristic. In this way, it tells us that there is a dominant factor present in the pairwise information.

This determination is done by comparing the the heuristic scores of each target link against a threshold value. For a LP factor $i$ and target link $(a, b)$, we denote the heuristic score as $s^{i}(a, b)$. The threshold value for factor $i$ is represented by $\hat{s}^i$ and is chosen such that it corresponds to a higher score. We desire $\hat{s}^i$ to be a higher score such that any score $\geq$ than it indicates that a plethora of pairwise information exists corresponding to factor $i$. This is done by setting the threshold equal to the $p$-th percentile value for that heuristic among all target links. For example, for CNs, the 80th percentile score on one dataset may be 9. The value of $p$ is chosen to be high (e.g., 80\%) due to the aforementioned reasoning. Given these inputs, for each target link we compare the score for factor $i$ against the threshold value of that factor. Continuing our example, if $(a, b)$ only has 2 CNs, it is below the previously defined threshold. We only consider a sample as ``belonging'' to a single factor when it is $s^i(a, b) \geq \hat{s}^i$ is true {\bf for one only one factor $i$}. So if the heuristic score for $(a, b)$ is below the $p$-th percentile threshold for CNs and PPR but above for feature similarity, then feature proximity will be considered the dominant LP factor. However, if it's above the threshold for both local and structural information, it will not be assigned to any group. This is done as we want to isolate links that only highly express one LP factor. This allows us to better understand how certain methods can model that specific factor. The detailed algorithm is given in Algorithm~\ref{alg:lp_factor}.

We note that each target link may not belong to a category. This can be due to there being no or many dominant LP factor.  We further set the percentile equal to 90\% on all datasets except for ogbl-collab for which we use 80\%. These values were chosen as we wanted the percentile to be suitably high such that we are confident that the corresponding factor is relevant to the target link. Furthermore, we use a lower value for ogbl-collab as we found it produced a more even distribution of links by factor.

\begin{figure*}[t]
\centering
    \begin{subfigure}{.5\textwidth}
      \centering
      \includegraphics[width=0.8\linewidth]{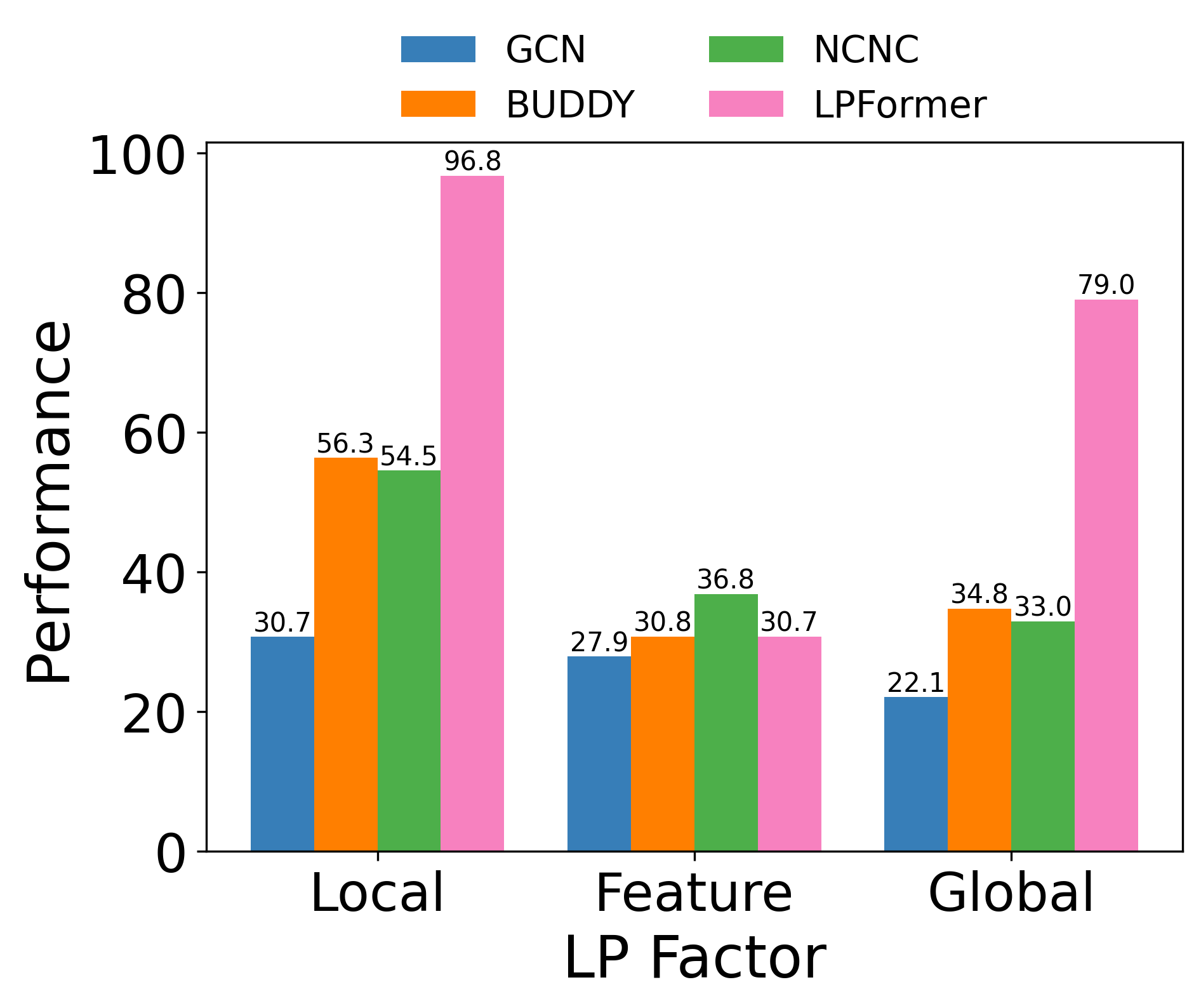}
      \caption{Pubmed}
      \label{fig:factor_pubmed}
    \end{subfigure}%
    \begin{subfigure}{.5\textwidth}
      \centering
      \includegraphics[width=0.8\linewidth]{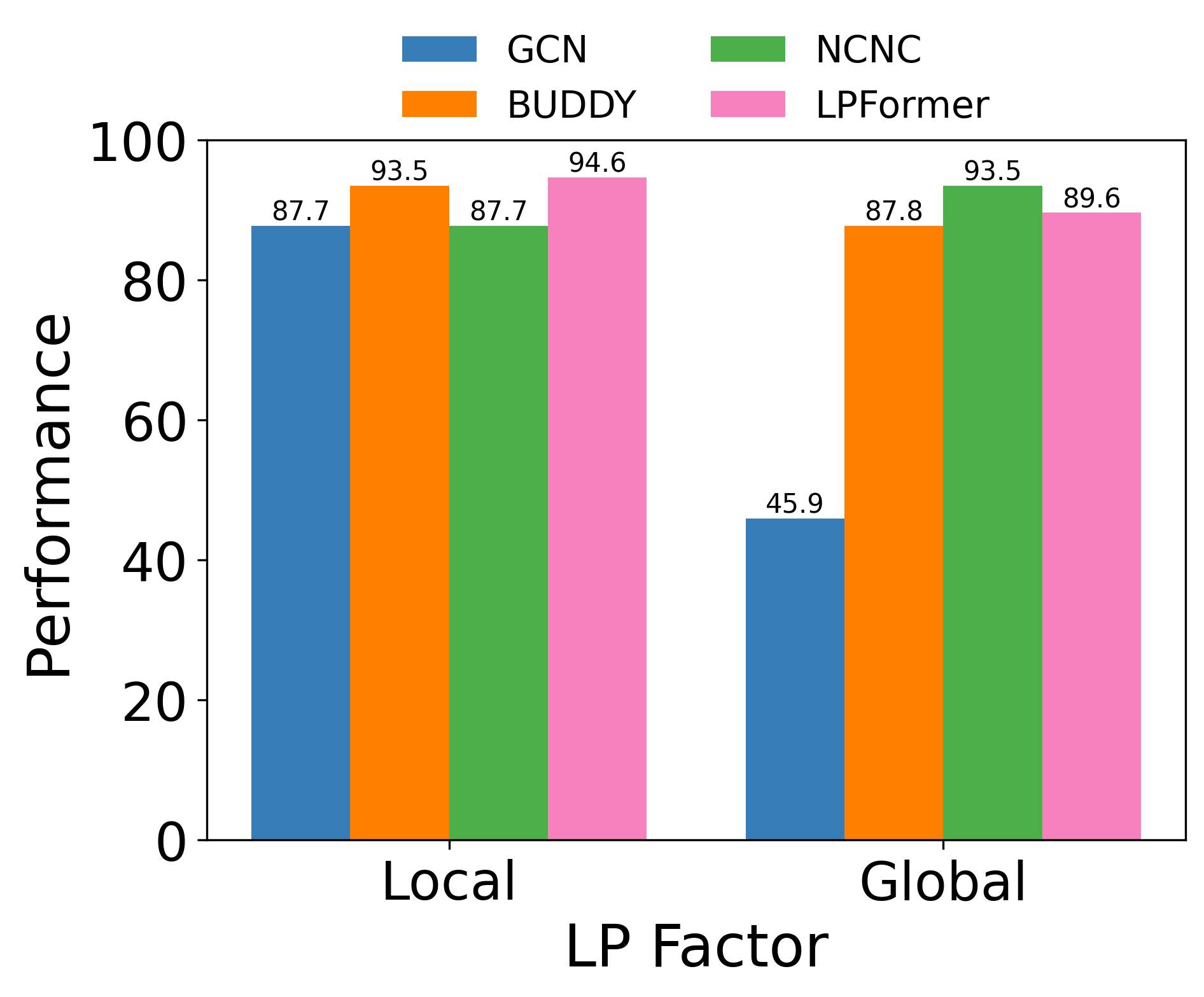}
      \caption{ogbl-ppa}
      \label{fig:factor_ppa}
    \end{subfigure}%
    \caption{Performance for target links when there is only one LP factor strongly expressed. Results are on (a) Pubmed, (b) ogbl-ppa. We note that due the quality of features used, we omit the feature proximity factor for ogbl-ppa from our analysis}

\label{fig:exp_factors_app}
\end{figure*}

\subsection{Additional Results for the LP Factor Experiments} \label{sec:app_factors_exp}

In Section~\ref{sec:exp_factor} we observed the performance of various methods on target links where only a single LP factor is expressed. This is done through the use of heuristic scores. We further demonstrate the results on the Pubmed and ogbl-ppa datasets. Of note is that for ogbl-ppa the initial node features are one-hot vectors that signify the species that the protein belongs to. We observe that due to the sparseness of these features, feature proximity measures are unable to properly predict any target links on their own. As such, the factor corresponding to feature proximity is not expressed. We therefore exclude that factor for this analysis on ogbl-ppa.

The results for both Pubmed and ogbl-ppa datasets are given in Figure~\ref{fig:exp_factors_app}. As shown earlier in Figure~\ref{fig:exp_factors}, LPFormer can most consistently perform well across each factor. This suggests that LPFormer is best able to both model a variety of factors and adapt accordingly for each target link.

\subsection{Performance on Heterophilic Datasets} \label{sec:heterophilic}

In this section we evaluate LPFormer on multiple heterophilic datasets. Heterophily refers to the tendency of dissimilar nodes to be connected. This is as opposed to homophily, in which nodes with similar attributed are more likely to be connected. Since most graphs used for benchmark datasets tend to contain homophilic patterns, heterophilic graphs present an interesting challenge regarding the effectiveness of graph-based methods. For a more detailed discussion on heterophilic graphs, please see~\cite{mao2024demystifying}.

We test on two prominent heterophilic datasets, Squirrel and Chameleon~\cite{heterophilic_datasets}. The statistics for each are in Table~\ref{table:heterophilic_stats}. We limit our comparison to those LP methods that tend achieve the best results, including GCN, BUDDY, and NCNC. In Table~\ref{tab:heterophilic_results}, we report the MRR over five random seeds. Note that we test under the {\bf original evaluation setting} and not HeaRT. We observe that LPFormer can achieve a large increase over other methods, with a 14\% and 9\% increase in performance on Squirrel and Chameleon, respectively. These results indicate the superior ability of LPFormer to accurately model LP on heterophilic graphs, as compared to other methods.

\begin{table}[h]
\centering
\vskip -0.25em
 \caption{Heterophilic Dataset Statistics. 
 }
 \vskip -1em
    \begin{tabular}{ccc}
    \toprule
     & Squirrel & Chameleon \\
     \midrule
    \#Nodes &  5201 & 2277 \\
    \#Edges & 198,353 & 31,371 \\
    Split Ratio & {85/5/10} & {85/5/10} \\
    \bottomrule
    \end{tabular}
    \label{table:heterophilic_stats}
\end{table}

\begin{table}[H]
\centering
    \vskip -1em
 \caption{Results on Heterophilic Datasets. 
 }
 \vskip -1em
    \begin{tabular}{c|cc}
    \toprule
     {\bf Method} & {\bf Squirrel} & {\bf Chameleon} \\
     \midrule
    {\bf GCN} & 22.77 ± 4.54 & 20.74 ± 8.08 \\ 
    {\bf BUDDY} & 9.69 ± 0.99 & 6.30 ± 2.40 \\ 
    {\bf NCNC} & 32.37 ± 5.46 & 26.24 ± 3.37 \\ 
    \midrule 
    {\bf LPFormer} & {\bf 36.77 ± 2.77} & {\bf 28.61 ± 6.68} \\ 
    \% Improvement & 14\% & 9\% \\
    \bottomrule
    \end{tabular}
    \label{tab:heterophilic_results}
\end{table}

\subsection{More Efficiently Incorporating the PPR Scores} \label{sec:app_ppr_new}

In Figure~\ref{fig:runtime} we compare the training time between LPFormer and NCNC. We observe that on the denser datasets, ogbl-ppa and ogbl-ddi, LPFormer is considerably more efficient. Furthermore, on ogbl-collab, both methods have a fast runtime. However, we find that LPFormer struggles on ogbl-citation2 in comparison to NCNC. We observe that this is due to the need of the PPR matrix, which while sparse, requires a large amount of memory and processing time. In the future, we plan to fix this problem by performing a simple and efficient pre-processing step. Specifically, before training, we can iterate over all target links and extract the relevant PPR scores. This would obviate the need to store the PPR matrix and determine the nodes for each link. Furthermore, this only needs to be done once before tuning the model. This would greatly reduce the storage and time needed to train LPFormer on all datasets and is an avenue we plan to explore in the future.

\begin{algorithm}[h]
\small
% \captionsetup{font=small, labelfont=small}
\caption{Determining Samples by LP Factor} \label{alg:lp_factor}
\begin{algorithmic}[1]

\Require
    \Statex $\text{CN}(\cdot)$ = Maps $(i, j)$ to \# of CNs of the pair
    \Statex $\text{PPR}(\cdot)$ = Maps $(i, j)$ to PPR score of the pair
    \Statex $\text{FS}(\cdot)$ = Maps $(i, j)$ to feature cosine similarity of the pair
    \Statex $p$ = Percentile used to determine whether a factor is present
    \Statex $\mathcal{E}^{\text{test}}$ = Positive test links
\newline
\State // Compute the score corresponding to the $p$-th percentile for each heuristic
\State $\hat{s}^{\text{CN}} = \text{Percentile}(p, \{CN(i, j) \: | \: (i, j) \in \mathcal{E}^{\text{test}} \})$
\State $\hat{s}^{\text{FS}} = \text{Percentile}(p, \{FS(i, j) \: | \: (i, j) \in \mathcal{E}^{\text{test}} \})$
\State $\hat{s}^{\text{PPR}} = \text{Percentile}(p, \{PPR(i, j) \: | \: (i, j) \in \mathcal{E}^{\text{test}} \})$
\newline 
\State Create empty lists $L^{\text{CN}}$, $L^{\text{PPR}}$, and $L^{\text{FS}}$
\For{$(i, j) \in \mathcal{E}^{\text{test}}$}
    \State link-cn  = $\text{CN}(i, j)$
    \State link-fs  = $\text{FS}(i, j)$
    \State link-ppr = $\text{PPR}(i, j)$
    \newline
    \State // Assign sample to corresponding list based on scores
    \If{$\text{link-cn} \geq \hat{s}^{\text{CN}}$ {\bf and} $\text{link-fs} < \hat{s}^{\text{FS}}$ {\bf and} $\text{link-ppr} < \hat{s}^{\text{PPR}}$ } 
        \State Append($L^{\text{CN}}$, $(i, j)$)
    \ElsIf{$\text{link-cn} < \hat{s}^{\text{CN}}$ {\bf and} $\text{link-fs} \geq \hat{s}^{\text{FS}}$ {\bf and} $\text{link-ppr} < \hat{s}^{\text{PPR}}$ } 
        \State Append($L^{\text{FS}}$, $(i, j)$)
    \ElsIf{$\text{link-cn} < \hat{s}^{\text{CN}}$ {\bf and} $\text{link-fs} < \hat{s}^{\text{FS}}$ {\bf and} $\text{link-ppr} \geq \hat{s}^{\text{PPR}}$ } 
        \State Append($L^{\text{PPR}}$, $(i, j)$)    
    \EndIf
\EndFor
\State \Return $L^{\text{CN}}$, $L^{\text{PPR}}$, $L^{\text{FS}}$

\end{algorithmic}
\end{algorithm}

\end{document}